\def\eqref#1{equation~\ref{#1}}
\def\1{\bm{1}}
\def\va{{\bm{a}}}
\def\vb{{\bm{b}}}
\def\vc{{\bm{c}}}
\def\ve{{\bm{e}}}
\def\vh{{\bm{h}}}
\def\vp{{\bm{p}}}
\def\vt{{\bm{t}}}
\def\vu{{\bm{u}}}
\def\vv{{\bm{v}}}
\def\vx{{\bm{x}}}
\def\vy{{\bm{y}}}
\def\mI{{\bm{I}}}
\def\mO{{\bm{O}}}
\def\mX{{\bm{X}}}
\def\mY{{\bm{Y}}}
\def\mZ{{\bm{Z}}}
\DeclareMathAlphabet{\mathsfit}{\encodingdefault}{\sfdefault}{m}{sl}
\SetMathAlphabet{\mathsfit}{bold}{\encodingdefault}{\sfdefault}{bx}{n}
\def\gD{{\mathcal{D}}}
\def\gE{{\mathcal{E}}}
\def\gV{{\mathcal{V}}}
\def\sR{{\mathbb{R}}}
\newcommand{\E}{\mathbb{E}}
\newcommand{\R}{\mathbb{R}}
\renewcommand{\epsilon}{\ensuremath\varepsilon}
\renewcommand{\phi}{\ensuremath{\varphi}}
 \newcommand{\hb}{\mathbf{h}}
 \newcommand{\Wb}{\mathbf{W}}
\newcommand{\Ecal}{\mathcal{E}}
\newcommand{\Gcal}{\mathcal{G}}
\newcommand{\Vcal}{\mathcal{V}}
\newcommand*{\argmin}{\mathop{\mathrm{argmin}}}
\renewcommand{\vec}[1]{\overset{\rightarrow}{#1}}
\begin{document}

\title{Equivariant Local Reference Frames for Unsupervised Non-rigid Point Cloud Shape Correspondence} 

\titlerunning{\textsc{EquiShape}}

\author{Ling Wang\inst{*1,2}
\and
Runfa Chen\inst{*2}
\and
Yikai Wang$^{\dagger}$\inst{2}
\and
Fuchun Sun$^{\dagger}$\inst{2}
\and
Xinzhou Wang\inst{2,3}
\and
Sun Kai\inst{2}
\and
Guangyuan Fu\inst{1}
\and
Jianwei Zhang\inst{4}
\and
Wenbing Huang\inst{5}}
\authorrunning{Wang et al.}

\institute{Xi’an Research Institute of High-Tech \and
Tsinghua University \and
Tongji University \and
University of Hamburg
\and
Renmin University of China \\
\email{yanyuwangl@gmail.com}, \\
\email{\{crf21,wangyk17,fcsun\}@mails.tsinghua.edu.cn}}
\maketitle
\begingroup
\renewcommand\thefootnote{}\footnote{\text{*} Indicates Equal Contribution. \text{\textdagger} Indicates Corresponding Author.
}
\addtocounter{footnote}{1}
\endgroup

\vspace{-1.7em}
\begin{abstract}
Unsupervised non-rigid point cloud shape correspondence underpins a multitude of 3D vision tasks, yet itself is non-trivial given the exponential complexity stemming from inter-point degree-of-freedom, \ie, pose transformations.
Based on the assumption of local rigidity, one solution for reducing complexity is to decompose the overall shape into independent local regions using Local Reference Frames (LRFs) that are invariant to SE(3) transformations. 
However, the focus solely on local structure neglects global geometric contexts, resulting in less distinctive LRFs that lack crucial semantic information necessary for effective matching. 
Furthermore, such complexity introduces out-of-distribution geometric contexts during inference, thus complicating generalization.
To this end,  we introduce 1) \textsc{EquiShape}, a novel structure tailored to learn pair-wise LRFs with global structural cues for both spatial and semantic consistency, and 2) LRF-Refine, an optimization strategy generally applicable to LRF-based methods, aimed at addressing the generalization challenges. 
Specifically, for \textsc{EquiShape}, we employ cross-talk within separate equivariant graph neural networks (Cross-GVP) to build long-range dependencies to compensate for the lack of semantic information in local structure modeling, deducing pair-wise independent SE(3)-equivariant LRF vectors for each point. 
For LRF-Refine, the optimization adjusts LRFs within specific contexts and knowledge, enhancing the geometric and semantic generalizability of point features. Our overall framework surpasses the state-of-the-art methods by a large margin on three benchmarks. Code and models will be publicly available.
\vspace{-.5em}
\keywords{Non-rigid Shape Correspondence \and Local Reference Frames  \and Generalization \and Equivariant Neural Networks}
\end{abstract}

\begin{figure}[!t]
  \begin{center}
      \includegraphics[width=1\textwidth]{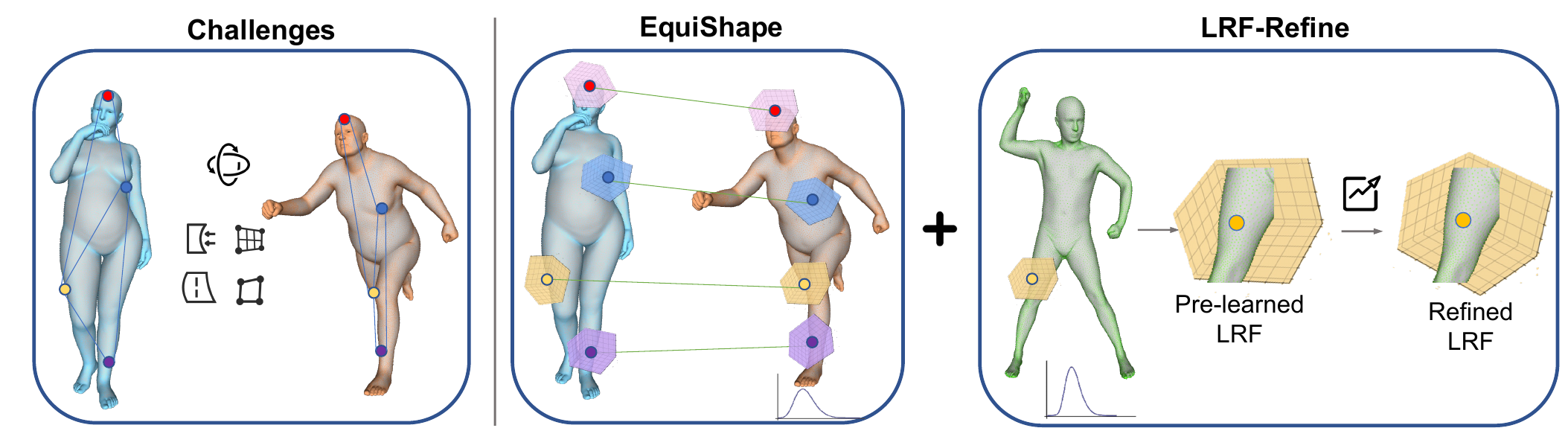}
      \vspace{-1.5em}
      \caption{ Illustration of our insight, which includes: 1) The challenge of exponential variations in shape deformation and orientation caused by inter-point pose transformations; 2) EquiShape addresses such challenge by decomposing the overall shape into independent local regions using LRFs invariant to SE(3) transformations and builds long-range dependencies to compensate for the lack of semantic information in local structure modeling; 3) LRF-Refine optimizes LRFs for adaptation to out-of-distribution geometric contexts, which inevitably arise from such challenge during inference.}

      
      \vspace{-2.5em}
      \label{fig:teaser}
  \end{center}
\end{figure}

\vspace{-2.2em}
\section{Introduction}
\label{sec:intro}
\vspace{-.5em}

Non-rigid shape matching, which predicts point-to-point correspondence between deformable 3D objects, is pivotal for a wide range of 3D tasks, including 3D deformation transfer \cite{zhang2023self,deng2021deformed}, motion tracking\cite{bozic2020neural}, 4D reconstruction \cite{wang2023root,lei2022cadex}, and shape editing\cite{sumner2007embedded}, \etc. 
However, the reliance on dense annotations presents a significant challenge, particularly when it is difficult to obtain these dense labels in practice. Consequently, recent trends have shifted away from methods dependent on pre-annotated guides \cite{donati2020deep,loper2023smpl} towards unsupervised settings \cite{groueix20183d,zeng2021corrnet3d,lang2021dpc}, which utilize deep neural networks to infer shape correspondences directly from raw, unannotated data. 
A paramount challenge in non-rigid shape matching arises from the exponential variations in shape deformation and orientation~\cite{chi2021garmentnets}, caused by inter-point pose transformations within and between shapes.
Existing methods commonly assume that the orientation of all shapes is pre-aligned to a canonical orientation to simplify handling variations in orientation~\cite{eisenberger2020smooth,cao2023self, sharma2020weakly}.
However, these assumptions frequently fall short in practical unsupervised scenarios~\cite{donati2022deep}, which limits the ability to generalize across diverse datasets.

To alleviate this, some existing methods
~\cite{deng2023se,jiang2023non} utilize data augmentation and approximation techniques to increase the diversity of training data. Yet, these approaches cannot essentially solve it since they may not be able to capture all possible orientation variations, especially for those that are complex or unique to a particular dataset.
Furthermore, the challenge of shape deformation complexity remains underexplored. 
One promising strategy to address these complexity issues is predicated on the assumption of local rigidity~\cite{qin2023deep,zhang2023self}, to disentangle the global pose transformation into independent local views for each local region, remarkably reducing the global search space. 
Previous practice in rigid-shape analysis learns rotation-invariant/equivariant descriptors for each point, \eg, PerfectMatch~\cite{gojcic2019perfect}, typically relying on hand-crafted Local Reference Frames (LRFs) which is usually based on the covariance analysis of the local surface, to transform local patches to a defined canonical representation.
This comprehensive strategy leads to enhanced accuracy across rigid datasets.
However, hand-crafted LRFs~\cite{tombari2010unique}  are limited in some scenarios with complicated geometry, which are observed to be prone to poor performance in our experiments.

The advent of EquivariantMP \cite{luo2022equivariant} marks a significant shift by integrating SE(3)-equivariant networks into LRF modeling in an end-to-end learning paradigm for point cloud analysis, significantly diminishing the dependency on hand-crafted design decisions.
However, to the best of our knowledge, no existing work has extended EquivariantMP to non-rigid shape matching. The reasons behind this might be twofold: 
1) Learning equivariant LRFs for pairwise or multiple shapes is non-trivial since it should consider both the local structure and all possible global geometric contexts, while maintaining the independent equivariance property between shapes. Here, the global geometric contexts refer to the global information from absolute coordinates and semantic consistency with the counterpart shape. 2) 
Spatial and semantic consistencies in the feature space are essential for effective matching, which necessitate LRFs addressing Out-Of-Distribution (OOD) generalization challenges. Such LRFs ensure corresponding local regions learn consistent semantic features within an aligned coordinate space. Here, the OOD challenges arise from the nearly infinite distribution space, leading the model to encounter geometric contexts during inference that are markedly different from those seen during training.

In response, we propose a novel framework \textbf{\textsc{EquiShape}} designed for non-rigid point cloud shape correspondence.
At its core, \textsc{EquiShape} employs the cross-talk within separate Geometric Vector Perceptrons (Cross-GVP) to learn pair-wise independent SE(3)-equivariant LRFs, where the global information from absolute coordinates is integrated into equivariant vector features. 
Besides, the cross-attention is utilized to aggregate invariant scale features from both shapes to compose the novel globally-aware equivariant and invariant features, while keeping the independent equivariance property between shapes. This approach broadens the range of the contextual clues for LRF learning, compensating for the semantic information gap in local structure modeling.

Additionally, adapting LRFs to specific contextual inputs is essential for further enhancing the feature-matching capabilities. We introduce an LRF Refinement Optimization (\textbf{LRF-Refine}) that applies gradient descent through the frozen model to directly optimize the residual LRF vectors, enabling LRF adaptation to specific contexts under the constraints of model knowledge. This refinement strategy enhances pre-learned features, allowing them to successfully generalize to specific scenes and consistently and significantly outperform existing approaches across a wide array of datasets.

To summarize, the contributions of our work include:
\vspace{-1.em}
\begin{itemize}
\item
To address the issue of exponential complexity arising from inter-point pose transformations, we propose \textsc{EquiShape}, equipped with a specially designed Cross-GVP. This model learns pairwise independent SE(3)-equivariant LRFs for each point, enabling the descriptors to potentially be decoupled from inter-point pose transformations while integrating sufficient global geometric contexts. To the best of our knowledge, this is the first work to incorporate equivariant networks in the field of non-rigid shape matching, marking a novel approach in this domain.

\item 
In response to the inherent challenges posed by OOD geometric contexts, exacerbated by extensive shape variations, we have incorporated LRF-Refine, an optimization strategy applicable to LRF-based methods. This refinement process adjusts the LRF vectors to specific inputs,  under the guidance of model constraints, thereby substantially improving the geometric and semantic generalizability of point features.

\item Substantial comparison results verify the efficacy of \textsc{EquiShape}, demonstrating a significant advancement in accuracy, robustness, and generalization over existing methods. 
\end{itemize}

\section{Related Work}
\label{sec:related}

\paragraph{LRF Method}
The efficacy of 3D rotation-invariant descriptors, like SHOT~\cite{tombari2010unique}, RoPS~\cite{guo2013rops}, GFrames~\cite{melzi2019gframes}, and TOLDI~\cite{yang2017toldi}, hinges on a robust and repeatable LRF. Traditionally, LRFs are estimated through eigenvalue decomposition of the covariance matrix from neighborhood points or by using geometric features such as normals. Despite their inherent rotation invariance, these methods falter in complex and noisy environments~\cite{yu2024riga}. With the advent of deep learning, there's a shift towards data-driven descriptor design. However, reliance on handcrafted features persists, limiting the exploitation of end-to-end learning~\cite{gojcic2019perfect,spezialetti2019learning}. Our \textsc{EquiShape} leverages the equivariant graph neural network, ensuring SE(3) equivariance and distinctiveness through innovative end-to-end LRF learning and long-range information integration.


\paragraph{Unsupervised Non-Rigid Shape Matching}
Unsupervised non-rigid shape matching methods are broadly categorized into spectral-based and spatial-based approaches. The former, predominantly grounded in the functional maps paradigm, has seen recent advancements with deep functional map methods \cite{spezialetti2019learning, cao2022unsupervised, cao2023self, jiang2023non} yielding state-of-the-art results for mesh-represented shapes. However, these spectral-based methods rely on topology information, which is less accessible in real-world captures and generally constrained to near-isometric deformation shape matching.
On the other hand, spatial-based methods require only the coordinates of point clouds ~\cite{halimi2019unsupervised, deprelle2019learning,zeng2021corrnet3d,lang2021dpc,deng2023se}, steering our focus towards point cloud shape correspondence. CorrNet3D~\cite{zeng2021corrnet3d}, for instance, explores an unsupervised encoder-decoder architecture for learning point cloud shape correspondence. DPC \cite{lang2021dpc} introduces self- and cross- losses for smooth correspondence mapping. Yet, previous methods have not adequately addressed discrepancies in orientations.
SE-ORNet \cite{deng2023se} tries to address this via data augmentation, essentially approximating rather than truly embodying equivariance. In contrast, our \textsc{EquiShape} innovatively incorporates geometric symmetry into the network design, ensuring that our model is theoretically precise in achieving SE(3)-equivariance.

\section{Preliminaries}
\label{sec:preli}


\begin{definition}[\(\text{SE}(3)\)-equivariance]
    \label{def:equ}
    Let $\Vec{\mZ}$ denote 3D geometric vectors (such as positions, etc.) that are steerable by \(\text{SE}(3)\) transformations, and let $\vh$ represent non-steerable features (like distances, etc.).
    A function $f$ is \(\text{SE}(3)\)-equivariant if, for any transformation $g \in \text{SE}(3)$, it holds that $f(g \cdot \Vec{\mZ}, \vh) = g \cdot f(\Vec{\mZ}, \vh)$, $\forall \Vec{\mZ} \in \mathbb{R}^{3 \times m}$ and $\vh \in \mathbb{R}^{d}$. Similarly, $f$ is \(\text{SE}(3)\)-invariant if $f(g \cdot \Vec{\mZ}, \vh) = f(\Vec{\mZ}, \vh)$. 
\end{definition}
    
In Definition~\ref{def:equ}, the group action $\cdot$ is instantiated as $g\cdot\Vec{\mZ}\coloneqq \mO\Vec{\mZ}$ for the special orthogonal group that only consists of rotations where $\mO\in \text{SO}(3)\coloneqq\{\mO\in\sR^{3\times 3}|\mO^\top\mO=\mI, \det \mO = 1\}$, and is additionally implemented as the translation $g\cdot \vec{\vx} \coloneqq \vec{\vx} + \vec{\vt}$ for the 3D coordinate vector where $\vt\in\R^{3}$. Note that for the input of $f$, we have added the right-arrow superscript on $\Vec{\mZ}$ to distinguish it from the scalar $\vh$ that is unaffected by the transformation.



\begin{definition}[LRF]
Given a point cloud $\vec{\mathcal{P}}$, the LRF $\mO(\vec{\vp})$ at point $\vec{\vp} \in \vec{\mathcal{P}}$ is defined as
\begin{equation}
   \mO(\vec{\vp})=\{\vec{\mathbf{x}}(\vec{\vp}), \vec{\mathbf{y}}(\vec{\vp}), \vec{\mathbf{z}}(\vec{\vp})\}, 
\end{equation}
where $\vec{\mathbf{x}}(\vec{\vp}), \vec{\mathbf{y}}(\vec{\vp}), \vec{\mathbf{z}}(\vec{\vp})$ are the orthogonal axes of the coordinate system, satisfying the right-hand rule $\vec{\mathbf{y}}=\vec{\mathbf{z}} \times \vec{\mathbf{x}}$. Defining a LRF means defining a way to compute each one of its axes. 
\end{definition}
The desirable properties for LRF are twofold: Firstly, invariance to \(\text{SE}(3)\) transformations, such as translations and rotations, is crucial. Secondly, distinctiveness is essential for effective correspondence matching.

\paragraph{Unsupervised Non-Rigid Point Cloud Shape Correspondence}  We are given as input a pair of non-rigid shapes, which are denoted as the source and target, consisting of $n$ points, respectively. These shapes are represented as unordered 3D point clouds ${\Vec{\mX}}\in\R^{3\times n}, {\Vec{\mY}}\in\R^{3\times n}$, respectively. 
The goal of unsupervised non-rigid point cloud shape correspondence is to find the mapping from source to target, \ie, $f: {\Vec{\mX}} \rightarrow {\Vec{\mY}}$, without ground-truth correspondence annotations. 
We have a dataset $\gD=\{\Vec{\mX}_i, \Vec{\mY}_i\}_{i=1}^{N}$ that consists of $N$ unlabeled pairs of shapes. 

We can represent a pair of non-rigid shapes as a pair of graphs $\Gcal_{\Vec{\mX}}=(\Vcal_{\Vec{\mX}},\Ecal_{\Vec{\mX}})$ and $\Gcal_{\Vec{\mY}}=(\Vcal_{\Vec{\mY}},\Ecal_{\Vec{\mY}})$, similar to \cite{lang2021dpc,deng2023se}. Each node $i \in \Vcal_{\Vec{\mX}}$ or $j \in \Vcal_{\Vec{\mY}}$ represents one point and has 3D location coordinates $\vec{\vx}_i \in {\Vec{\mX}}$, $\vec{\vy}_j \in {\Vec{\mY}}$, respectively. The graph topology $\Ecal_{\Vec{\mX}}, \Ecal_{\Vec{\mY}}$ are given by a k-nearest-neighbor ($k$NN) graph using Euclidean distance of the original 3D point clouds ${\Vec{\mX}}, {\Vec{\mY}}$, respectively.

A graph neural networks (GNNs) $\varphi_{\theta}$ is then employed to learn high-dimensional point representations for correspondence $F_{\Vec{\mX}} \in \mathbb{R}^{c \times n}, F_{\Vec{\mY}} \in \mathbb{R}^{c \times n}$, where $c$ is the feature dimension, given the 3D point clouds ${\Vec{\mX}}, {\Vec{\mY}}$ and the graph topology $\Ecal_{\Vec{\mX}}, \Ecal_{\Vec{\mY}}$ as the source and target input\footnote{For simplicity, we omit the graph topology $\Ecal_{\Vec{\mX}}, \Ecal_{\Vec{\mY}}$ in the input, since the graph topology $\Ecal_{\Vec{\mX}}, \Ecal_{\Vec{\mY}}$ can be given by a $k$NN graph using Euclidean distance of the input ${\Vec{\mX}}, {\Vec{\mY}}$, respectively. }, \ie,
 \begin{align}
 \label{eq:graph}
     F_{\Vec{\mX}}, F_{\Vec{\mY}} = \varphi_{\theta}\left({\Vec{\mX}}, {\Vec{\mY}} \right).
 \end{align}

Then, we can get a similarity matrix $S(F_{\Vec{\mX}}, F_{\Vec{\mY}})
\in \mathbb{R}^{n \times n}$  to measure their correspondence (commonly by measuring proximity as the cosine of the angle between their representations). 
During inference, we set the closest point $\vec{\vy}_j^{*} \in {\Vec{\mY}}$ in the high-dimensional point representations space for each point $\vec{\vx}_i \in {\Vec{\mX}}$ as its corresponding point.
This mapping can be formulated as:
\begin{equation}
        \label{inference}
        f\left(\vec{\vx}_{i}\right)=\vec{\vy}_{j^{*}}, j^{*}=\underset{j}{\operatorname{argmax}} (s_{i j}),
\end{equation}
where $s_{i j}$ are the i$^{\text{th}}$ rows and j$^{\text{th}}$ columns of $S(F_{\Vec{\mX}}, F_{\Vec{\mY}})$.





\paragraph{Invariance}
In this work, we bypass the focus on permutation equivariance and invariance, commonly inherent in graph neural networks across point-wise and global features. However, to mitigate the challenge of learning an effective correspondence network within the extensive search space, characterized by data distribution variability in terms of coordinate frame systems,
we aim for the predicted correspondence, specifically the similarity matrix $S(\cdot, \cdot)$, to be independent of the initial positions and orientations of the two shapes.
Formally,
\begin{equation}
\begin{split}
    S\left(F_{g_1\cdot{\Vec{\mX}}}, F_{g_2\cdot{\Vec{\mY}}}\right) \equiv  S\left(F_{\Vec{\mX}}, F_{\Vec{\mY}}\right),
\end{split}
\label{eq:SE(3)-invariance}
\end{equation}
where, $\forall g_1, g_2 \in \text{SE}(3), \forall {\Vec{\mX}}, {\Vec{\mY}}\in\R^{3\times n}$.

\section{Method}
\label{sec:Method}



In this section, we present our novel framework, consisting of \textsc{EquiShape} and LRF-Refine. We first elaborate our entire \textsc{EquiShape} model in \Cref{subsec:model}, consisting of Cross-talk within separate Geometric Vector Perceptrons (Cross-GVP), Parameterized Local Reference Frame Transform (LRF-Transform), and Training objectives. In addition, we introduce Local Reference Frames Refinement Optimization (LRF-Refine) in \Cref{subsec:transformation_refinement}.

\subsection{\textsc{EquiShape Model}}
\label{subsec:model}

\paragraph{Overview} 
Given the input ${\Vec{\mX}}$, ${\Vec{\mY}}$, we are in demand of a highly expressive $\varphi_\theta$ to learn high-dimensional point representations for correspondence.
In accordance with \Cref{eq:SE(3)-invariance}, a viable strategy is to decouple the global space into independent local views that remain invariant to transformations, \eg, translations and rotations.
Specifically, we adopt the pair-wise LRF for each point, decomposing the single overall shape into local regions through \(k\)-nearest neighbor (\(k\)NN) graph, with each region being independently transformed by LRF \(\mO\), derived from Cross-GVP. This model facilitates local matching within a common space in a pair-wise independent \(\text{SE}(3)\)-equivariant manner, 
considering \textbf{correlated} semantic consistency, \eg, global information from absolute coordinates and long-range dependencies with the counterpart shape, while maintaining \textbf{independent} spatial consistency of the two shapes. This is a distinct departure from frameworks like EquivariantMP~\cite{luo2022equivariant} which focus solely on general point cloud analysis without considering these critical aspects.  The following paragraphs detail each component of this model.

\paragraph{Cross-talk within separate Geometric Vector Perceptrons (Cross-GVP)}
In particular, our Cross-GVP processes the following operations in each computation.

We first separately construct $k$NN non-rigid shape static graphs $\Gcal_{\Vec{\mX}}=(\Vcal_{\Vec{\mX}},\Ecal_{\Vec{\mX}})$ and $\Gcal_{\Vec{\mY}}=(\Vcal_{\Vec{\mY}},\Ecal_{\Vec{\mY}})$, which is aligned with methodologies previously established in the literature~\cite{ginzburg2021dwc, lang2021dpc}.

Then, we define the edge properties and the processed node input for source shapes as follows:
\begin{equation}
\begin{aligned}
    \label{eq:edge}
     \vh_{ij} = \|\vec{\vx}_i - \vec{\vx}_j\|_2, 
     \vec{\mZ}_{ij} =  \vec{\vx}_i \ominus \vec{\vx}_j, \quad (i,j) \in \gE_{\Vec{\mX}},\\
\end{aligned}
\end{equation}
\begin{equation}
\begin{aligned}
    \label{eq:init}
    \vh^{(0)}_i = 0, 
    \vec{\mZ}^{(0)}_i = \vec{\vx}_i \ominus \vec{\vc_x}, \quad i \in \Vcal_{\Vec{\mX}},
\end{aligned}
\end{equation}
where the operation ``$\ominus$'' transforms the input positions into translation invariant representations by subtraction. $\vec{\vc_x} = \frac{1}{|\Vcal_{\Vec{\mX}}|} \sum_{i \in \Vcal_{\Vec{\mX}}} \vec{\vx}_i$ is the mean position of the points within the source point cloud. The superscript (0) indicates the initial state of the node inputs. 
Notably, EquivariantMP~\cite{luo2022equivariant} focuses solely on the local structure, such as edge distances and direction vectors, while neglecting global coordinates $\vec{Z}^{(0)}_i$. However, in non-rigid shape matching, these coordinates represent a fundamental global geometric context that contains rich semantic information.

\begin{figure*}[t!]
\centering
\includegraphics[width=\linewidth]{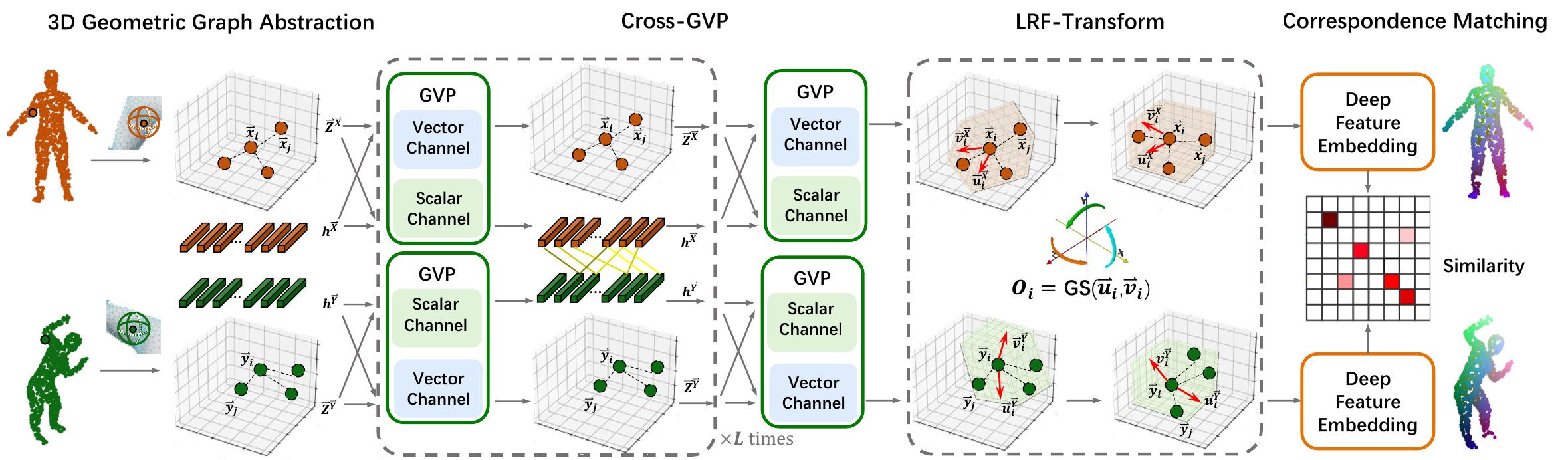}
\vspace{-1.5em}
\caption{\textbf{Illustrative flowchart of \textsc{EquiShape}}. 3D Geometric Graph is a graph equipped with the attribute $\Vec{\mZ}$, signifying 3D geometric vector features (\eg, global positions, relative directions) that are steerable via \(\text{SE}(3)\) transformations. 
The notation $\vh$ denotes non-steerable scalar features, \eg, distances, embedding features. 
\(\vec{\vu}_i, \vec{\vv}_i\) are the pair-wise independent SE(3)-equivariant LRF vectors output by Cross-GVP.
The function $\operatorname{GS}(\cdot,\cdot)$ constructs per-point pairwise LRF $\mO_i$ by applying the Gram-Schmidt orthogonalization process to these LRF vectors.
}
\label{fig:main}
\vspace{-1.5em}
\end{figure*}

Next, the processed inputs are passed to the Geometric Vector Perceptron (GVP) graph convolution layer~\cite{jing2020gvp,satorras2021en}, denoted as GVP-G:
\begin{equation}
\label{eq:gvp-g}
\aligned
(\vh^{(l)}_i, \vec{\mZ}^{(l)}_i) = \operatorname{GVP-G}_{l}(\vh^{(l-1)}_i, \vec{\mZ}^{(l-1)}_i, \Ecal_{\Vec{\mX}},\vh_{ij}, \vec{\mZ}_{ij}), 
i \in \Vcal_{\Vec{\mX}}, (i,j)\in\gE_{\Vec{\mX}}.
\endaligned
\end{equation}

Similarly, for target shapes $\Vec{\mY}$, the aforementioned definitions apply.


To enable dynamic adaptation in response to counterpart shape,  we design a cross-talk mechanism to broaden the scope of the contextual clues for LRF learning. Specifically, by integrating a cross-attention module to aggregate invariant scale features of both shapes to correlate paired inputs effectively, while keeping the independent equivariance property of the two shapes, node representations are updated as follows:
\begin{equation}
\label{eq:qkv}
\aligned
    \boldsymbol{\mu}_{i} &=  \sum_{j\in\Vcal_{\Vec{\mY}}}a_{j\to i} \Wb \hb_j^{(l)}, \forall i \in\Vcal_{\Vec{\mX}},  
\endaligned
\end{equation}
\begin{align}
\label{eq:bid-mu}
    \vh^{(l)}_i&=[\vh^{(l)}_i, \boldsymbol{\mu}_{i}], \forall i \in\Vcal_{\Vec{\mX}}\cup\Vcal_{\Vec{\mY}},
\end{align}
where \(\Wb\) is a learnable matrix, and \(a_{j \to i}\) is the cross-attention coefficient determined by trainable shallow neural networks \(\psi^q\) and \(\psi^k\):
\begin{align}
\label{eq:cross-att}
    a_{j\to i} &= \frac{\text{exp}(\langle \psi^q(\hb_i^{(l)}),\psi^k(\hb_j^{(l)})\rangle)}{\sum_{j'}\text{exp}(\langle \psi^q( \hb_i^{(l)}), \psi^k(\hb_{j'}^{(l)})\rangle)},
\end{align}
where $\langle\cdot,\cdot\rangle$ is the inner product of two vectors.
Note that all parameters of $\operatorname{GVP-G}_{l}, \Wb, \psi^q, \psi^k$  are shared for the same layers between shapes. The operations are stacked over $L$ layers in total.

With multiple layers of message fusion on the graphs of the pair of shapes, we can extract the pair-wise independent SE(3)-equivariant LRF vectors \((\vec{\vu}_i, \vec{\vv}_i)\) from the output of the GVP layer:
\begin{equation}
\label{eq:gvp}
\aligned
(\vec{\vu}_i, \vec{\vv}_i)\in\R^{3\times 2} = \operatorname{GVP}(\vh^{(L)}_i, \vec{\mZ}^{(L)}_i), i \in \Vcal_{\Vec{\mX}}\cup\Vcal_{\Vec{\mY}}.
\endaligned
\end{equation}

The output of the Cross-GVP is thus defined as:
\begin{equation}
\aligned
    \left((\vec{\vu},\vec{\vv})\in\R^{3\times 2\times 2 \times n} \right) = 
    \operatorname{Cross-GVP}(\Vec{\mX}, \Vec{\mY}).
\endaligned
\end{equation}


\paragraph{Parameterized Local Reference Frame Transform (LRF-Transform)}

We then employ the Gram-Schmidt orthogonalization process~\cite{luo2022equivariant} to normalize and orthogonalize the two rotation vectors for each point:
\begin{equation}
\label{eq:ortho}
\aligned
    \vec{\ve}_{i1} & = \frac{\vec{\vu}_{i}}{\| \vec{\vu}_{i} \|}, \\
    \vec{\ve}_{i2} & = \frac{\vec{\vv}_{i} - \langle\vec{\vv}_{i},\vec{\ve}_{i1}\rangle \vec{\ve}_{i1}}{\| \vec{\vv}_{i} - \langle\vec{\vv}_{i},\vec{\ve}_{i1}\rangle \vec{\ve}_{i1} \|}, \\
    \vec{\ve}_{i3} & = \vec{\ve}_{i1} \times \vec{\ve}_{i2}. 
\endaligned
\end{equation}
Here $\langle\cdot,\cdot\rangle$ is the inner product of two vectors and $\times$ is the cross product.  We refer to the aforementioned procedure as GS, following which we proceed to construct a pair-wise independent equivariant LRF $\mO$ for each point:
\begin{equation}
\label{eq:ort_matrix}
    \mO_i = \left[ \vec{\ve}_{i1}, \vec{\ve}_{i2}, \vec{\ve}_{i3} \right] = \operatorname{GS}(\vec{\vu}_{i}, \vec{\vv}_{i}), \quad i \in \Vcal_{\Vec{\mX}}\cup\Vcal_{\Vec{\mY}}.
\end{equation}
Although we utilize GVP~\cite{jing2020gvp}, an E(3)-equivariant network, the cross product operator in GS can transform E(3)-equivariant vectors into SE(3)-equivariant matrices (see proof in \Cref{sec:proof}).




At this stage, we establish the LRF for each point \( i \), with \(\Vec{\vx}_i\) as the origin and $\vec{\ve}_{i1}$ as the orientation of the $x$-axis, that decouples the relative position for each point from the global pose of the entire point cloud. Then we apply parameterized transformation to achieve local  \(\text{SE}(3)\)-invariant input features:
\begin{equation}
\label{eq:prl}
\aligned
    \{\vh_i\} = \{\operatorname{Agg} \sigma (\mO^\intercal_i(\Vec{\vx}_j - \Vec{\vx}_i))\} = \operatorname{Transform}(\mO, \Vec{\mX}), (i,j)\in\gE_{\Vec{\mX}},\\
\endaligned
\end{equation}
where $\operatorname{Agg}$ is a permutation-invariant aggregation operator (common choices include max, sum, and mean), $\sigma$ is an MLP layer.

In the next step, we feed the learned \(\text{SE}(3)\)-invariant features into the classical graph neural networks for shape correspondence (commonly DGCNN~\cite{zeng2021corrnet3d,lang2021dpc}):
\begin{equation}
\label{eq:dgcnn}
\aligned
    F_{\Vec{\mX}}\in\R^{c\times n} = \operatorname{DGCNN}(\{\vh_i\}), i \in \gV_{\Vec{\mX}}\\
\endaligned
\end{equation}

The definitions for target shapes $\mY$ are similar. These above processes result in the final architecture of \textsc{EquiShape}, with the full flowchart visualized in \Cref{fig:main}. $\varphi_{\theta}$ can be concretely instantiated as \textsc{EquiShape}. The output of \textsc{EquiShape} is then denoted as:
\begin{equation}
\label{eq:EquiShape}
\aligned
(F_{\Vec{\mX}}, F_{\Vec{\mY}}) \in\R^{c\times 2 \times n} 
= \operatorname{\textsc{EquiShape}}(\Vec{\mX}, \Vec{\mY}).
\endaligned
\end{equation}

\begin{proposition}
\label{prop:EquiShape}
    \textsc{EquiShape} satisfy the constraints in \Cref{eq:SE(3)-invariance}.
\end{proposition}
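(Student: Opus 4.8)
The plan is to trace the SE(3) transformations $g_1$ and $g_2$ through each stage of the \textsc{EquiShape} pipeline and verify that the final features $F_{\Vec{\mX}}, F_{\Vec{\mY}}$ are themselves SE(3)-invariant, which immediately implies that the similarity matrix $S(F_{\Vec{\mX}}, F_{\Vec{\mY}})$ is invariant as required by \Cref{eq:SE(3)-invariance}. Write $g_1 \cdot \Vec{\mX}$ as $\mO_1 \Vec{\mX} + \vec{\vt}_1$ (columnwise) and similarly for $g_2 \cdot \Vec{\mY}$. The argument decomposes into three links: (i) the edge inputs and initial node inputs of Cross-GVP are translation-invariant and rotation-equivariant; (ii) Cross-GVP, being built from GVP layers and a cross-attention over scalar features, outputs LRF vectors $(\vec{\vu}_i, \vec{\vv}_i)$ that are translation-invariant and transform as $\mO_1 \vec{\vu}_i$ (resp.\ $\mO_2$) under rotation, independently per shape; (iii) the GS orthogonalization and the parameterized \texttt{Transform} cancel the remaining rotation, yielding invariant $\{\vh_i\}$, after which DGCNN preserves invariance.

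First I would handle link (i): in \Cref{eq:edge} the quantity $\vec{\mZ}_{ij} = \vec{\vx}_i \ominus \vec{\vx}_j$ is a difference of coordinates, so the translation $\vec{\vt}_1$ cancels and it maps to $\mO_1 \vec{\mZ}_{ij}$; the scalar $\vh_{ij} = \|\vec{\vx}_i - \vec{\vx}_j\|_2$ is unchanged since $\mO_1$ is orthogonal. In \Cref{eq:init}, $\vec{\vc_x}$ is the centroid, which transforms the same way as each $\vec{\vx}_i$, so $\vec{\mZ}^{(0)}_i = \vec{\vx}_i \ominus \vec{\vc_x}$ is translation-invariant and becomes $\mO_1 \vec{\mZ}^{(0)}_i$; the scalar init is $0$. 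For link (ii), I would invoke the E(3)-equivariance of GVP-G (\Cref{eq:gvp-g}) — cited from \cite{jing2020gvp,satorras2021en} — so that after one layer the scalars are unchanged and the vectors pick up $\mO_1$; crucially the cross-attention in \Cref{eq:qkv}–\Cref{eq:cross-att} operates only on scalar features $\hb^{(l)}$, which are already invariant, so $\boldsymbol{\mu}_i$ is invariant and the concatenation in \Cref{eq:bid-mu} preserves the invariance of scalars while leaving vectors untouched; since the two graphs are processed by separate GVP streams with only scalar-level coupling, shape $\Vec{\mX}$ sees only $\mO_1$ and shape $\Vec{\mY}$ only $\mO_2$ — this is the "independent equivariance" point. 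Inductively over the $L$ layers, the output $(\vec{\vu}_i, \vec{\vv}_i)$ satisfies $(\vec{\vu}_i, \vec{\vv}_i) \mapsto (\mO_1 \vec{\vu}_i, \mO_1 \vec{\vv}_i)$ for $i \in \Vcal_{\Vec{\mX}}$.

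Link (iii) is where the LRF cancellation happens. Applying Gram-Schmidt (\Cref{eq:ortho}) to rotated vectors $\mO_1 \vec{\vu}_i, \mO_1 \vec{\vv}_i$ yields $\mO_i \mapsto \mO_1 \mO_i$, using orthogonality of $\mO_1$ for the normalization and inner-product steps and $\det \mO_1 = 1$ for the cross product (this is the content of \Cref{sec:proof}, which I would cite rather than reprove). Then in \Cref{eq:prl}, $\mO_i^\intercal(\Vec{\vx}_j - \Vec{\vx}_i)$ becomes $(\mO_1 \mO_i)^\intercal(\mO_1 \vec{\vx}_j + \vec{\vt}_1 - \mO_1 \vec{\vx}_i - \vec{\vt}_1) = \mO_i^\intercal \mO_1^\intercal \mO_1 (\vec{\vx}_j - \vec{\vx}_i) = \mO_i^\intercal(\vec{\vx}_j - \vec{\vx}_i)$, so $\{\vh_i\}$ is exactly SE(3)-invariant; DGCNN (\Cref{eq:dgcnn}) acting on invariant inputs gives invariant $F_{\Vec{\mX}}$, and symmetrically $F_{\Vec{\mY}}$ depends only on $\Vec{\mY}$ up to $g_2$. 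Since $F_{\Vec{\mX}}$ and $F_{\Vec{\mY}}$ are each unchanged, so is $S(F_{\Vec{\mX}}, F_{\Vec{\mY}})$, establishing \Cref{eq:SE(3)-invariance}. The main obstacle, and the point deserving the most care, is verifying that the cross-attention coupling genuinely keeps the two shapes' equivariance \emph{independent} — i.e.\ that no vector-valued (steerable) feature is ever passed between the $\Vec{\mX}$ and $\Vec{\mY}$ streams, only invariant scalars — since otherwise a spurious $\mO_2$ could leak into $F_{\Vec{\mX}}$; everything else is a routine equivariance bookkeeping exercise built on the cited GVP and Gram-Schmidt lemmas.
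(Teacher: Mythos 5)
Your proposal is correct and follows essentially the same route as the paper: the paper likewise factors the argument into the independent SE(3)-equivariance of Cross-GVP (with the cross-attention coupling only invariant scalar features between the two streams), the equivariance of the Gram--Schmidt LRF construction, and the SE(3)-invariance of the LRF-Transform, then concludes invariance of $F_{\Vec{\mX}}, F_{\Vec{\mY}}$ and hence of $S(\cdot,\cdot)$. The point you flag as deserving the most care --- that only non-steerable scalars cross between the $\Vec{\mX}$ and $\Vec{\mY}$ streams --- is exactly the observation the paper's corresponding lemma rests on.
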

\begin{proof}
See \Cref{sec:proof}.
\end{proof}

\paragraph{Training Objective}
To learn point representation suitable for non-rigid point cloud shape correspondence without ground-truth supervision, we apply unsupervised training objectives following the previous work~\cite{lang2021dpc}.
%
%

We obtain the point clouds $\Vec{\mY}_c, \Vec{\mX}_s, \Vec{\mX}_c, \Vec{\mY}_s$ by cross-construction and self-construction.
Then, we constrain the training with the construction loss as follows:
\begin{equation}
    \label{construction_loss}
    \mathcal{L}_{cons}=\lambda_{cc}(\operatorname{CD}(\Vec{\mY}, \Vec{\mY}_c) + \operatorname{CD}(\Vec{\mX}, \Vec{\mX}_c)) 
    + \lambda_{sc}(\operatorname{CD}(\Vec{\mY}, \Vec{\mY}_s) + \operatorname{CD}(\Vec{\mX}, \Vec{\mX}_s)),
\end{equation}
where $\lambda_{cc}, \lambda_{sc}$ are hyperparameters and
$\operatorname{CD}$ means the Chamfer Distance.

Then, we constrain the training with the mapping loss as follows:
\begin{equation}
    \label{regularization}
    \mathcal{L}_{map}=\lambda_{m}(\mathcal{L}_{m}(\Vec{\mX}, \Vec{\mY}_c) + \mathcal{L}_{m}(\Vec{\mY}, \Vec{\mX}_c)),
\end{equation}
where $\lambda_{m}$ is a hyperparameter, and $\mathcal{L}_{m}$ denotes the cross-neighborhoods regularization term. To sum up, the overall unsupervised training objective in the dataset $\gD$ is:
\begin{align}
    \label{overall_loss}
    \mathcal{L}_{total} &=  \mathcal{L}_{cons} + \mathcal{L}_{map},\\
    \theta^* &= \argmin_{\theta}\E_{\{\Vec{\mX},\Vec{\mY}\}\sim\gD} \mathcal{L}_{total}(\Vec{\mX},\Vec{\mY}) .
\end{align}

More details on training objectives
are shown in  \Cref{sec:details}.

\subsection{Local Reference Frames Refinement Optimization}
\label{subsec:transformation_refinement}

\paragraph{Overview} 
For non-rigid tasks, the exponential complexity stemming from inter-point pose transformations renders the task ill-posed. This complexity ensures that learning-based methods struggle to cover all distributions during the training phase, inevitably leading to significant OOD challenges at the inference stage. 
To overcome this hurdle,  recent works \cite{lang2023scoop,hong2022neural,hatem2023point} directly refine coordinates, akin to per-point warping, to enhance generalizability, yet they optimize the model's final output solely based on loss, without the constraints of model knowledge, which, in our experiments~\Cref{subsec:refine}, are observed to be prone to poor improvement.
Moreover, since LRFs dictate whether correspondence local regions can match within an aligned coordinate space, we underscore the vital role of LRFs in enhancing the geometric and semantic generalizability of point features. 
To this end, we address this by directly refining the LRF vectors \((\vec{\vu}_i, \vec{\vv}_i)\), derived from Cross-GVP, adapted to specific geometric contexts under the constraints of the model during the inference stage. The following paragraphs detail the LRFs refinement process.


\begin{figure}[!t]
\centering
      \includegraphics[width=\textwidth]{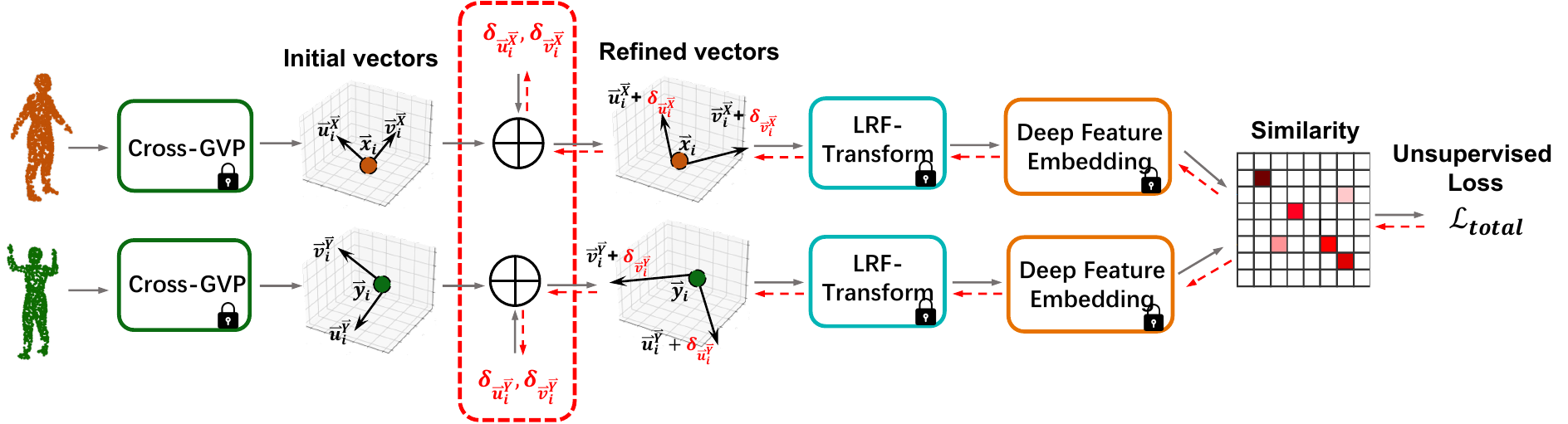}
      \vspace{-1.5em}
      \caption{\textbf{Illustrative flowchart of LRF-Refine}. Dashed lines represent back-propagating gradients, while locks indicate that parameters are frozen.}  
      
      \vspace{-1.em}
      \label{fig:optimization}
\end{figure}


\paragraph{Optimization Approach}
Specifically, we employ gradient descent to \textit{directly} optimize the residual LRF refinement vectors, aiming to minimize the loss defined in \Cref{overall_loss}. This process adjusts the LRF vectors in response to the specific contexts of OOD input shape, facilitating local matching within the refined LRFs. By focusing on refining LRFs rather than correspondence coordinates, our method ensures the preservation of learned matching knowledge while enhancing feature generalizability to novel local distortions.

\paragraph{Optimization Formulation} The refinement is formulated as an optimization problem:

\begin{equation} 
\label{eq:transformation_refinement}
\begin{aligned}
(\vec{\vu}_i, \vec{\vv}_i)  &= (\vec{\vu}_i + \delta_{\vec{\vu}_i}, \vec{\vv}_i + \delta_{\vec{\vv}_i}), \\
\delta^* &= \argmin_{\delta} \mathcal{L}_{\text{total}}(\Vec{\mX},\Vec{\mY}).
\end{aligned}
\end{equation}

Here, \(\delta = \{\delta_{\vec{\vu}_i}, \delta_{\vec{\vv}_i}\}_{i \in \Vcal_{\Vec{\mX}}\cup\Vcal_{\Vec{\mY}}}\) represents the set of residual LRF refined vectors for each pair of source and target shapes in the dataset \(\{\Vec{\mX},\Vec{\mY}\} \in \gD\). 
The objective \(\mathcal{L}_{\text{total}}\), as detailed in \Cref{overall_loss}, is utilized to optimize these vectors. This process adapts LRFs to specific, unseen pairs of shapes under the constraints of the frozen models, as illustrated in \Cref{fig:refine}.

\section{Experiments}
\label{sec:exp}

\subsection{Experimental Setup}
\paragraph{Baselines}  
We contrast our method with the recent state-of-the-art point-based matching methods 
CorrNet3D~\cite{zeng2021corrnet3d}, DPC~\cite{lang2021dpc} and SE-ORNet~\cite{deng2023se}. 
These methods uniformly utilize point clouds as input and do not require ground truth for supervision. Note that the results of all the examined baselines are reproduced using their publicly available official source code and checkpoints.
Please refer to  \Cref{sec:details} for more details about baselines.

\paragraph{Datasets}
To showcase the effectiveness and broad applicability of our method, we conduct experiments across a diverse range of datasets, as shown on ~\Cref{tbl:dataset}.
Among these, SURREAL~\cite{groueix20183d} , SHREC’19~\cite{melzi2019shrec}, and CAPE~\cite{cape1}  are human datasets, while SMAL~\cite{zuffi20173d} and TOSCA~\cite{bronstein2008numerical} are animal datasets. All of them are categorized as non-rigidity datasets. Notably, SHREC’19 and CAPE exhibit orientation discrepancies between source-target shapes, with CAPE additionally being a real-world dataset.
The details of all datasets are introduced in the  \Cref{sec:details}.

\paragraph{Metrics}
The evaluation metrics include the correspondence accuracy and the average correspondence error.

The correspondence accuracy can be formulated for pairs of source and target shapes $\{\Vec{\mX}^t,\Vec{\mY}^t\} \in \gD^t$  as:
\begin{equation}
	\label{correspondence accuracy}
\aligned
  &\text{acc}(\epsilon)=
  &\E_{\{\Vec{\mX}^t,\Vec{\mY}^t\}\sim\gD^t}\frac{1}{|\Vcal_{\Vec{\mX}^t}|} \sum_{ i \in \Vcal_{\Vec{\mX}^t}} \mathbb{I}\left(\left\|f\left(\Vec{\vx}_{i}\right)-\Vec{\vy}_i^{g t}\right\|_{2}<\epsilon d\right),
\endaligned
\end{equation}
where $\mathbb{I} (\cdot)$ is the indicator function, $d$ is the maximal Euclidean distance between points in $\Vec{\mY}^t$, and $\epsilon \in [0, 1]$ is an error tolerance, $\Vec{\vy}_i^{g t} \in \Vec{\mY}^t$ is the ground-truth matching point to $\Vec{\vx}_{i}$. 

Based on the Euclidean-based measure, the average correspondence error is defined  as follows:
\begin{equation}
	\label{correspondence error}
	\text{err}=\E_{\{\Vec{\mX}^t,\Vec{\mY}^t\}\sim\gD^t}\frac{1}{|\Vcal_{\Vec{\mX}^t}|} \sum_{ i \in \Vcal_{\Vec{\mX}^t}} \left\|f\left(\Vec{\vx}_{i}\right)-\Vec{\vy}_i^{g t}\right\|_{2},
\end{equation}
where the unit is centimeter (cm).

\begin{table}[!t]
\centering
\setlength{\tabcolsep}{2.8pt}
\caption{\textbf{Dataset statistics.} We report the number of training, validation, and test samples used in the quantitative evaluations.}
\label{tbl:dataset}
\footnotesize
\vspace{-.7em}
\resizebox{0.7\textwidth}{!}{%
\begin{tabular}{l|cccccc}
\toprule
Dataset           & \# Train /val & \# Test & Category & Rigidity & Align & Real \\ \midrule
SURREAL~\cite{varol2017learning}          & 1600/400           & /           &    human    & \ding{55}  & \ding{51}       & \ding{55}          \\
SHREC'19~\cite{melzi2019shrec} & 1513/379          & 430          & human       & \ding{55}        &  \ding{55} &  \ding{55}    \\
SMAL~\cite{zuffi20173d}       & 8000/2000           & /           & animal       & \ding{55}   & \ding{51}     &  \ding{55}         \\
TOSCA~\cite{bronstein2008numerical}           & /             & 286           &  animal      &   \ding{55}  & \ding{51}      &   \ding{55}          \\
CAPE~\cite{cape1}           & /             & 209           &  human      &   \ding{55}   & \ding{55}     &   \ding{51}     \\\bottomrule
\end{tabular}
}
\vspace{-.8em}
\end{table}

\paragraph{Implementations}
Our non-rigid shape graphs are constructed using $k=27$ neighboring nodes. In the Cross-GVP module, we utilize $L=3$ GVP-G layers to generate \(\text{SE}(3)\)-equivariant LRF vectors. Each layer features have $d=64$ hidden feature channels. 
We implement the Cross-GVP module by adapting the open-source variant of GVP-based graph convolutional networks (GVP-GCN) \cite{jing2020gvp, luo2022equivariant} combined with cross-attention \cite{vaswani2017attention,ganea2021independent}. At last, we adapt the open-source DGCNN~\cite{wang2019dynamic} implementation for our shape correspondence module, where the output channels of $\operatorname{EdgeConv}$ are $\{64, 64, 128, 256, 512\}$, which is same as the $\operatorname{EdgeConv}$ of CorrNet3D Encoder~\cite{zeng2021corrnet3d}.
Please refer to  \Cref{sec:details} for detailed model parameter statistics and more implementation details.

\begin{table*}[!t]
\centering
    \footnotesize
    \setlength\tabcolsep{7pt}
    \caption{ \textbf{Comparisons with State-of-the-Art methods.} Here, ``acc'' represents the correspondence accuracy at an error tolerance of 0.01, whereas ``err'' indicates the average correspondence error measured in centimeters (cm). A higher ``acc'' coupled with a lower ``err'' metric signifies superior performance. ``A/B'' indicates training on the A dataset and testing on the B dataset.
    The results for other methods are reproduced using their officially released source code and checkpoints.
    }
    \label{table:sota}
    \vspace{-0.7em}
    \resizebox{\textwidth}{!}{
    \begin{tabular}{c|c|cc|cc|cc|cc}
      \toprule
      \multirow{2}{*}{Methods}  & \multirow{2}{*}{Params} & \multicolumn{2}{c|}{SHREC'19~/~SHREC'19} & \multicolumn{2}{c|}{SURREAL~/~SHREC'19} & \multicolumn{2}{c|}{SMAL~/~TOSCA} & \multicolumn{2}{c}{SURREAL~/~CAPE}\\
      \cline{3-4}\cline{5-6}\cline{7-8}\cline{9-10}
      & &  acc $\uparrow$ & err $\downarrow$ & acc $\uparrow$ & err $\downarrow$ & acc $\uparrow$ & err $\downarrow$ & acc $\uparrow$ & err $\downarrow$ \\
      \midrule
      CorrNet3D\cite{zeng2021corrnet3d}          & 5.4M & /  & / & 7.4\% & 18.3 &/ & / & 28.3\%& 9.3\\
      DPC\cite{lang2021dpc}                     & 1.5M& 14.4\%  & 14.4 & 17.4\% & 15.0 &33.8\% &7.4 & 50.5\% & 5.6 \\
      SE-ORNet\cite{deng2023se}                     & 2.5M & 16.9\%  & 14.0 & 21.8\% & 11.8 &38.1\% & 5.2 & 43.2\% & 8.0\\
        \textbf{Ours}                              &  \textbf{762K} &\textbf{22.1\%}  & \textbf{13.7} & \textbf{30.3\%} &  \textbf{10.1} & \textbf{57.7\%}& \textbf{3.0} & \textbf{68.4\%} & \textbf{2.8} \\
      \bottomrule
    \end{tabular}
    }
    \vspace{-.5em}
\end{table*}

\subsection{Comparisons with State-of-the-Arts}
Our primary quantitative results are illustrated in \Cref{table:sota}. 
Particularly noteworthy is the performance in cross-dataset evaluations. 
Our method outperforms the baselines by a large margin on both human and animal benchmarks, affirming its robustness and broad applicability.
In addition, we assessed the generality of our approach on the real-world CAPE dataset, as shown in \Cref{table:sota}  (right). Our method achieves a remarkable 68.4\% accuracy with a mere 2.8 cm error—outshining SE-ORNet's 43.2\% accuracy and 8.0 cm error.  As depicted in \Cref{fig:acc}, it consistently exceeds baselines across varying error tolerances, establishing its reliability even under strict precision constraints. 
Observing \Cref{fig:vis}, our method exhibits superior visualization results compared to all baselines, particularly on the SHREC'19, which features misalignment in orientation.

\begin{figure}[!t]
  \begin{center}
      \includegraphics[width=\linewidth]{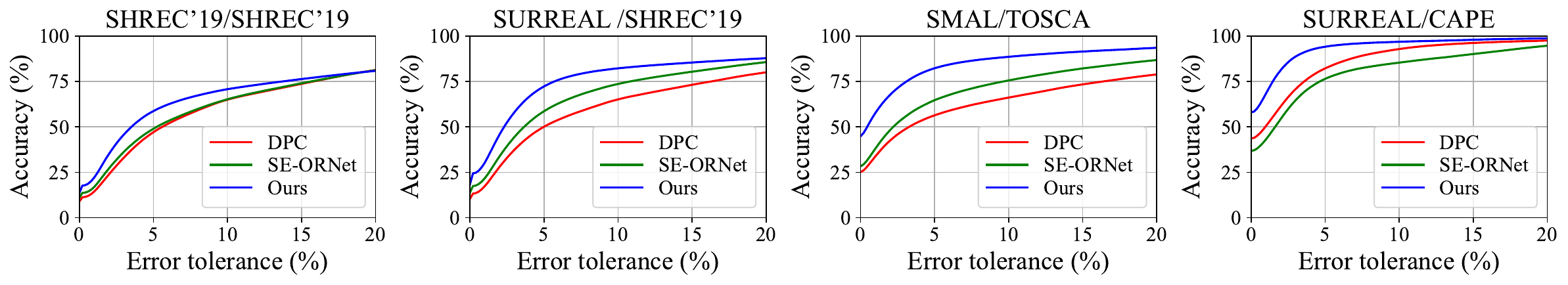}
      \vspace{-1.5em}
      \caption{\textbf{Correspondence accuracy at various error tolerances.} 
      }
      \vspace{-1.5em}
      \label{fig:acc}
  \end{center}
\end{figure}

\begin{figure}[!t]
\begin{minipage}[b]{0.5\textwidth}
  \begin{center}
      \includegraphics[width=0.95\textwidth]{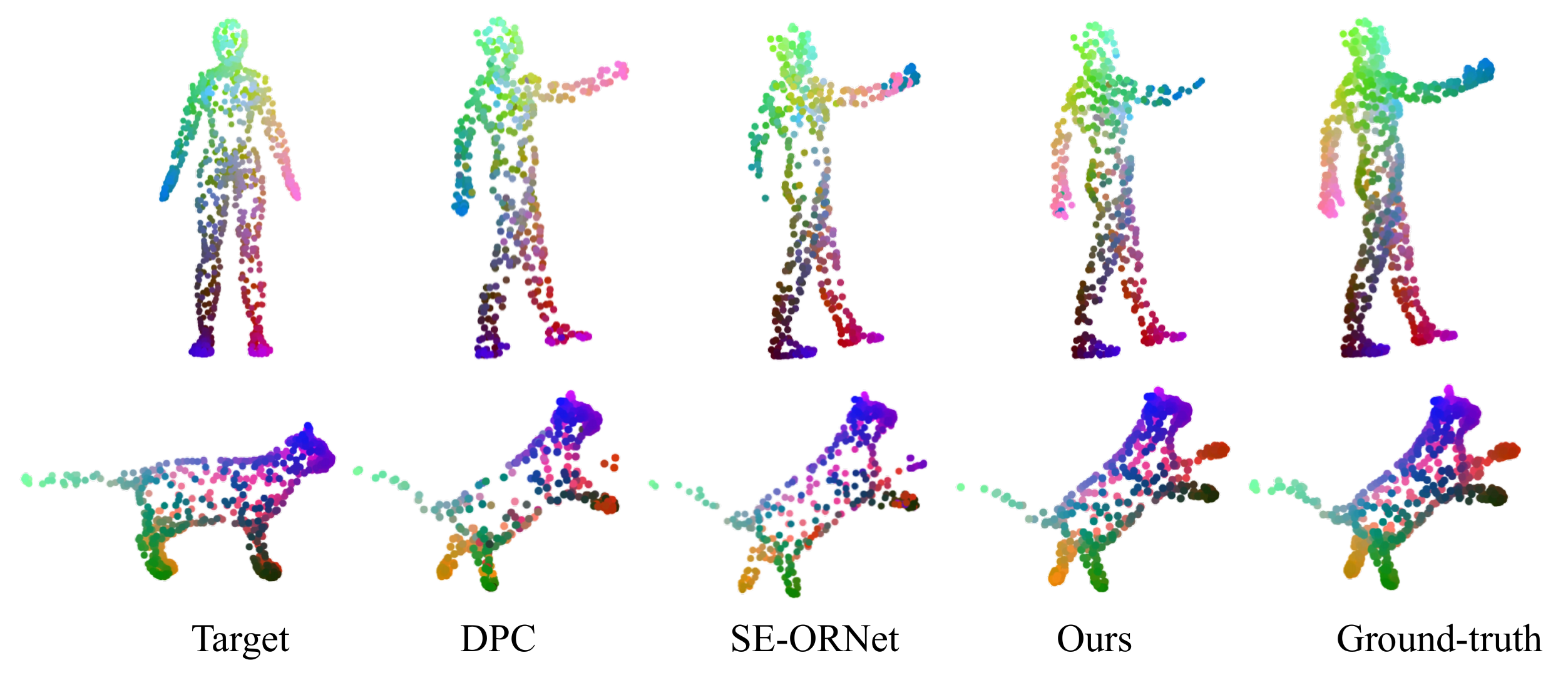}
      \vspace{-.5em}
      \caption{\textbf{Visualizations of the correspondence results} from SHREC'19 (top) and TOSCA (bottom) test set.
      }
      \vspace{-1.5em}
      \label{fig:vis}
  \end{center}
  \end{minipage}
  \hspace{0.5em}
  \begin{minipage}[b]{0.46\textwidth}
  \begin{center}
      \includegraphics[width=0.9\linewidth]{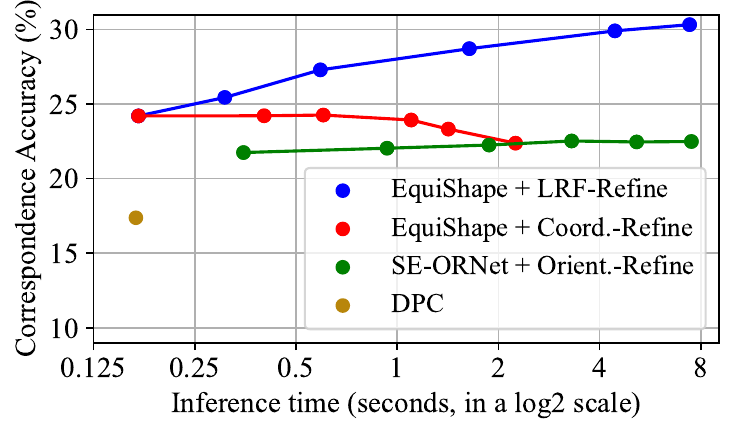}
      \vspace{-.5em}
      \caption{\textbf{Refinement and time efficiency analysis.} 
      }
      \vspace{-1.5em}
      \label{fig:refine}
  \end{center}
  \end{minipage}
\end{figure}

\subsection{Ablation Study}
\label{subsec:ablation}
We further conduct extensive ablation studies to elucidate the design choices. 
1) Rows (a) and (b) utilize invariant descriptors output by equivariant networks, namely Vector Neuron Networks (VNN)~\cite{deng2021vector} and GVP-GCN~\cite{jing2020gvp}, respectively.
Compared to VNN, commonly used in point cloud tasks, GVP-GCN boosts performance by leveraging a graph neural network to capture contextual correlations.
2) \textit{LRF}. 
Row (e) utilizes GVP-GCN, as described in EquivariantMP~\cite{luo2022equivariant}, to end-to-end learn the LRF. In contrast, Row (c) manually constructs the LRF following the methodology outlined in SHOT~\cite{tombari2010unique}, demonstrating the superiority of learnable LRFs over handcrafted ones. Furthermore, the comparison of Row (c) to Row (b) illustrates that applying LRFs to local regions explicitly decomposes the structure, thereby enhancing performance.
3) \textit{Refine}. 
The application of our proposed refinement strategy in Rows (d), (f), and (h) yields superior outcomes, demonstrating the refinement strategy's generally applicable for LRF-based methods and its significant impact on addressing OOD generalization.
 4) \textit{Cross-talk}. 
Row (g) further demonstrates that performance in shape matching significantly improves when leveraging cross-talk to seek global contextual clues compared to Row (f). 
We further explore the impact of hyperparameter $k$ of $k$NN in Cross-GVP. 
As demonstrated in \cref{table:knn}, a small \(k\) leads to insufficient geometric context and less distinctive features, while a large \(k\) disrupts local spatial consistency, deteriorating performance.

\begin{table}[!t]
\centering
    \footnotesize
    \setlength\tabcolsep{4pt}
    \caption{\textbf{Alation Study.} We consistently employ the DGCNN as the feature extractor and identical training objectives. 
    ``EquivariantNet'' refers to the use of an equivariant network prior to DGCNN. ``LRF'' indicates the construction of LRFs before DGCNN. ``Refine'' denotes the optimization of LRFs during the inference stage. ``Cross-talk'' signifies the use of cross-attention within GVP-GCN.
    All ablative methods are trained on SURREAL and evaluated on SHREC'19. }
    \vspace{-0.7em}
    \resizebox{0.9\columnwidth}{!}{
    \begin{tabular}{l|cccc|c}
      \toprule
     Method & EquivariantNet & LRF &   Refine & Cross-talk &  acc $\uparrow$ \\
      \midrule
      (a)~VNN~\cite{deng2021vector}&\ding{51}&\ding{55}&\ding{55}&\ding{55}&9.5\%\\
      (b)~GVP-GCN~\cite{jing2020gvp}&\ding{51}&\ding{55}&\ding{55}&\ding{55}&11.4\%\\
      (c)~SHOT~\cite{tombari2010unique}&\ding{55}&\ding{51}&\ding{55}&\ding{55}&15.1\%\\
      (d)~SHOT~\cite{tombari2010unique} + LRF-Refine~&\ding{55}&\ding{51}&\ding{51}&\ding{55}&20.3\%\\
      
       (e)~EquivariantMP~\cite{luo2022equivariant}&\ding{51}&\ding{51}&\ding{55}&\ding{55}& 19.9\%\\
         (f)~EquivariantMP~\cite{luo2022equivariant} + LRF-Refine~&\ding{51}&\ding{51}&\ding{51}&\ding{55}& 25.4\%\\
      (g)~\textsc{EquiShape}&\ding{51}&\ding{51}&\ding{55}&\ding{51}& 24.2\%\\
      (h)~\textbf{\textsc{EquiShape} + LRF-Refine}&\ding{51}&\ding{51}&\ding{51}&\ding{51}&\textbf{30.3\%}\\
      \bottomrule
    \end{tabular}
   }
    \label{table:ablation}
    \vspace{-.5em}

\end{table}

\begin{table}[!t]
\centering
    \setlength\tabcolsep{2.5pt}
    \caption{\textbf{Hyperparameter Analysis of $k$NN}: Models trained on SURREAL and assessed for accuracy at an error tolerance of 0.01, on SHREC'19 test set. 
    }

    \label{table:knn}
    \vspace{-0.8em}
    \resizebox{0.4\columnwidth}{!}{
    \begin{tabular}{c|cccc}
      \toprule
     \ $k$  & 10  &  27 &  50 & 100   \\ 
      \midrule
       acc (\% ) $\uparrow$ & 14.0     & \textbf{30.3}     &  27.2   &19.5   \\
      \bottomrule
    \end{tabular}
    }
    \vspace{-1.5em}

\end{table}


\subsection{Refinement and Time Analysis}
\label{subsec:refine}
We analyze the influence of the refinement process on enhancing generalizability. We compare three distinct refinement strategies: ``\textsc{EquiShape} + Coord.-Refine'', which directly refines correspondence coordinates following the approach in SCOOP~\cite{lang2023scoop} within our framework; ``SE-ORNet + Orient.-Refine'', which refines the estimated relative orientation angles within the SE-ORNet framework; and our proposed ``\textsc{EquiShape} + LRF-Refine''. 
As illustrated in \Cref{fig:refine}, \textsc{EquiShape}, augmented with the proposed LRF-Refine process, not only achieves higher correspondence accuracy compared to ``SE-ORNet + Orient.-Refine'' but also exhibits a notable reduction in inference time. 
Conversely, ``\textsc{EquiShape} + Coord.-Refine'' shows limited improvement, potentially due to the absence of model knowledge guidance.
The blue curve in the graph portrays our method’s flexibility, offering a continuum where users can choose the optimal compromise between error margins and computational speed.

\section{Conclusion}
\label{sec:Conclusion}

In conclusion, our work introduces \textsc{EquiShape} and LRF-Refine to address the profound challenges in unsupervised non-rigid point cloud shape correspondence, stemming from exponential complexity and out-of-distribution geometric contexts. By innovatively leveraging pair-wise independent SE(3)-equivariant LRFs through Cross-GVP and refining these LRFs to adapt to specific contexts, we significantly enhance both the spatial and semantic generalizability of point features. Our approach not only sets a new benchmark by outperforming existing methods by a considerable margin on standard datasets but also pioneers the integration of equivariant networks in non-rigid shape matching. This paves the way for future research in achieving effective matching across the vastly diverse scenarios encountered in 3D vision tasks.

\bibliographystyle{splncs04}
\bibliography{main}
\clearpage
\appendix
\setcounter{page}{1}
\setcounter{proposition}{0}
\setcounter{section}{0}

\section{Proofs}
\label{sec:proof}
In this section, we theoretically prove that our proposed \textsc{EquiShape} satisfy the constraints in \Cref{eq:SE(3)-invariance} as desired.

\begin{theorem}
\label{proof:Cross-GVP}
The learned vectors, denoted as 
\begin{align*}
\left((\vec{\vu}^{\Vec{\mX}},\vec{\vv}^{\Vec{\mX}}), (\vec{\vu}^{\Vec{\mY}},\vec{\vv}^{\Vec{\mY}}) \right) = \operatorname{Cross-GVP}(\Vec{\mX}, \Vec{\mY}),
\end{align*}
are \(\text{SE}(3)\)-equivariant, satisfying 
any transformation $g_1,  g_2\in\text{SE}(3)$, 
\begin{align*}
\left((g_1 \cdot \vec{\vu}^{\Vec{\mX}},g_1 \cdot \vec{\vv}^{\Vec{\mX}}), (g_2 \cdot\vec{\vu}^{\Vec{\mY}},g_2 \cdot \vec{\vv}^{\Vec{\mY}}) \right) =
\operatorname{Cross-GVP}(g_1 \cdot \Vec{\mX}, g_2 \cdot\Vec{\mY}).
\end{align*}
\end{theorem}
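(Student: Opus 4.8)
# Proof Proposal for Theorem (Equivariance of Cross-GVP)

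The plan is to prove equivariance by tracking how an $\text{SE}(3)$ transformation propagates through each stage of the Cross-GVP pipeline, exploiting the fact that GVP-based layers are E(3)-equivariant by construction and that the cross-talk mechanism only exchanges \emph{invariant} scalar features. First I would handle the translation part: since all geometric vectors entering the network are built by the subtraction operator ``$\ominus$'' --- edge vectors $\vec{\mZ}_{ij} = \vec{\vx}_i \ominus \vec{\vx}_j$ and initial node vectors $\vec{\mZ}^{(0)}_i = \vec{\vx}_i \ominus \vec{\vc_x}$ --- any global translation $g_1 \cdot \Vec{\mX} = \Vec{\mX} + \vec{\vt}_1 \vone^\top$ cancels (note $\vec{\vc_x}$ translates with the cloud), so the inputs to the GVP layers depend only on the rotational part. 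Thus it suffices to prove $\text{SO}(3)$-equivariance and then note translation-invariance of the vector outputs is automatic.

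Next I would set up an induction over the $L$ layers, with the inductive hypothesis that after layer $l$, for a rotation $g_1 = \mO_1$ applied to $\Vec{\mX}$ one has $\vh^{(l),\Vec{\mX}}_i$ unchanged (scalar features are invariant) and $\vec{\mZ}^{(l),\Vec{\mX}}_i \mapsto \mO_1 \vec{\mZ}^{(l),\Vec{\mX}}_i$, and symmetrically for $\Vec{\mY}$ under $\mO_2$. The base case follows from the input construction above. For the inductive step I would invoke two facts: (i) $\operatorname{GVP-G}_l$ is E(3)-equivariant (citing \cite{jing2020gvp,satorras2021en}), so it sends rotated vector inputs to rotated vector outputs while leaving scalar outputs invariant; and (ii) the cross-attention update in \Cref{eq:qkv}--\Cref{eq:cross-att} operates \emph{only} on the scalar features $\hb^{(l)}_i, \hb^{(l)}_j$ --- the coefficients $a_{j\to i}$ are inner products of functions of invariant scalars, hence invariant, and $\boldsymbol{\mu}_i$ is a combination of invariant scalars, hence invariant. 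Crucially, because the cross-talk never touches the vector channels, the equivariance of the $\Vec{\mX}$-branch vectors under $\mO_1$ and the $\Vec{\mY}$-branch vectors under $\mO_2$ remain \emph{independent} --- this is exactly the ``independent equivariance'' property claimed. The final $\operatorname{GVP}$ readout in \Cref{eq:gvp} is likewise E(3)-equivariant, so $(\vec{\vu}^{\Vec{\mX}}_i, \vec{\vv}^{\Vec{\mX}}_i) \mapsto (\mO_1\vec{\vu}^{\Vec{\mX}}_i, \mO_1\vec{\vv}^{\Vec{\mX}}_i)$ and symmetrically for $\Vec{\mY}$, completing the induction and hence the theorem.

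The main obstacle I anticipate is being careful about the bookkeeping in the cross-talk step --- specifically verifying that the concatenation in \Cref{eq:bid-mu} is consistent across the two branches (the $\boldsymbol{\mu}_i$ for $i\in\Vcal_{\Vec{\mX}}$ aggregates over $\Vcal_{\Vec{\mY}}$ and vice versa, so one must confirm invariance holds symmetrically) and that no vector-valued quantity ever leaks into the attention scores. A secondary technical point worth spelling out is the precise sense in which ``$\ominus$'' and the mean-centering $\vec{\vc_x}$ give translation invariance even though the GVP layers are stated as E(3)-equivariant rather than merely $\text{SO}(3)$-equivariant --- the cleanest route is to observe that every vector fed to a GVP layer is already a difference of coordinates, so translations are killed at the input and only rotations need tracking. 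Everything else is a routine propagation argument, and I would keep the write-up at the level of ``by equivariance of $\operatorname{GVP-G}_l$ and invariance of the cross-attention coefficients, the inductive hypothesis is preserved.''
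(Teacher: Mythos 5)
Your proposal is correct and follows essentially the same route as the paper's proof: translations are killed at the input because all vector features are coordinate differences, the GVP layers propagate rotations equivariantly within each branch, and the cross-talk exchanges only invariant scalars, so the two branches' equivariances remain independent. The paper states this in a single paragraph without the explicit layer-wise induction, but the underlying argument is identical.
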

\begin{proof}

Consider the equations updating the geometric vectors $\vec{\mZ}^{(l)}_i, \vec{\vu}_i, \vec{\vv}_i$ in \Cref{eq:edge,eq:init,eq:gvp-g,eq:gvp}. These updates are \(\text{SE}(3)\)-equivariant with respect to the inputs from the same graph as node $i$ but  \(\text{SE}(3)\)-invariant with respect to the inputs from the other graph. This is because the cross-attention mechanism in \Cref{eq:qkv,eq:bid-mu,eq:cross-att} utilizes invariant non-steerable features from the other graph. Consequently, the output vectors $\left((\vec{\vu}^{\Vec{\mX}},\vec{\vv}^{\Vec{\mX}}), (\vec{\vu}^{\Vec{\mY}},\vec{\vv}^{\Vec{\mY}}) \right)$ obtained from $\operatorname{Cross-GVP}(\Vec{\mX}, \Vec{\mY})$ satisfy any transformation $g_1,  g_2\in\text{SE}(3)$, 
\begin{align*}
\left((g_1 \cdot \vec{\vu}^{\Vec{\mX}},g_1 \cdot \vec{\vv}^{\Vec{\mX}}), (g_2 \cdot\vec{\vu}^{\Vec{\mY}},g_2 \cdot \vec{\vv}^{\Vec{\mY}}) \right) =
\operatorname{Cross-GVP}(g_1 \cdot \Vec{\mX}, g_2 \cdot\Vec{\mY}).
\end{align*}
\end{proof}

\begin{theorem}
\label{proof:eqo}
The learned pair-wise LRF for each point, denoted as $ \mO_i = \operatorname{GS}(\vec{\vu}_{i}, \vec{\vv}_{i})$, are \(\text{SE}(3)\)-equivariant, satisfying any transformation $g \in\text{SE}(3)$, $g \cdot  \mO_i = \operatorname{GS}(g \cdot \vec{\vu}_{i}, g \cdot  \vec{\vv}_{i})$. 
\end{theorem}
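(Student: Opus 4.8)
\textbf{Proof proposal for Theorem~\ref{proof:eqo}.}

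The plan is to reduce the claim to two elementary facts about the Gram--Schmidt map $\operatorname{GS}$: that it is equivariant under the translation part of $\text{SE}(3)$ (trivially, because its inputs $\vec{\vu}_i,\vec{\vv}_i$ are already geometric vectors built from coordinate differences, hence translation-invariant, and the frame $\mO_i$ is a rotation matrix on which translations act trivially), and that it intertwines rotations, i.e.\ $\operatorname{GS}(\mR\vec{\vu}_i, \mR\vec{\vv}_i) = \mR\,\operatorname{GS}(\vec{\vu}_i,\vec{\vv}_i)$ for every $\mR\in\text{SO}(3)$. Since $\vec{\vu}_i,\vec{\vv}_i$ are $\text{SE}(3)$-equivariant by Theorem~\ref{proof:Cross-GVP}, establishing these two facts immediately gives $g\cdot\mO_i = \operatorname{GS}(g\cdot\vec{\vu}_i, g\cdot\vec{\vv}_i)$, which is exactly the assertion (recalling that the action on a frame/orthogonal matrix $\mO$ is $g\cdot\mO \coloneqq \mR\mO$, matching the convention after Definition~\ref{def:equ}).

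First I would handle the rotation part by checking \Cref{eq:ortho} line by line. For the first axis, $\vec{\ve}_{i1} = \vec{\vu}_i/\|\vec{\vu}_i\|$; since $\|\mR\vec{\vu}_i\| = \|\vec{\vu}_i\|$ for orthogonal $\mR$, we get $\mR\vec{\vu}_i/\|\mR\vec{\vu}_i\| = \mR\vec{\ve}_{i1}$. For the second axis, the key observation is that the inner product is rotation-invariant, $\langle\mR\vec{\vv}_i,\mR\vec{\ve}_{i1}\rangle = \langle\vec{\vv}_i,\vec{\ve}_{i1}\rangle$, so the un-normalized vector $\vec{\vv}_i - \langle\vec{\vv}_i,\vec{\ve}_{i1}\rangle\vec{\ve}_{i1}$ transforms to its $\mR$-image, and normalizing again preserves this; hence $\vec{\ve}_{i2}\mapsto\mR\vec{\ve}_{i2}$. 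For the third axis I would invoke the standard identity that the cross product is equivariant under \emph{special} orthogonal transformations: $(\mR\va)\times(\mR\vb) = (\det\mR)\,\mR(\va\times\vb) = \mR(\va\times\vb)$ when $\det\mR=1$. Therefore $\vec{\ve}_{i3}\mapsto\mR\vec{\ve}_{i3}$, and assembling the three columns, $\operatorname{GS}(\mR\vec{\vu}_i,\mR\vec{\vv}_i) = [\mR\vec{\ve}_{i1},\mR\vec{\ve}_{i2},\mR\vec{\ve}_{i3}] = \mR\,\mO_i$. The translation part is then immediate: the inputs to $\operatorname{GS}$ at this stage are invariant to the global translation (this was already arranged via the ``$\ominus$'' operation in \Cref{eq:edge,eq:init} and is carried through Theorem~\ref{proof:Cross-GVP}), and a translation acts as the identity on the orthogonal matrix $\mO_i$, so both sides agree. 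Combining the two cases over a general $g = (\mR,\vt)\in\text{SE}(3)$ yields $g\cdot\mO_i = \operatorname{GS}(g\cdot\vec{\vu}_i, g\cdot\vec{\vv}_i)$.

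The only genuinely delicate point — and the one worth stating explicitly — is the role of $\det\mR = 1$ in the cross-product step: under a pure reflection the cross product picks up a sign, so the map $\operatorname{GS}$ is only $\text{SE}(3)$-equivariant, not $\text{E}(3)$-equivariant, even though the upstream network GVP is $\text{E}(3)$-equivariant. This is the remark already flagged after \Cref{eq:ort_matrix}, and I would make sure the proof records precisely where the orientation-preservation hypothesis is consumed. A secondary bookkeeping issue is genericity: Gram--Schmidt is only well-defined when $\vec{\vu}_i\neq 0$ and $\vec{\vv}_i$ is not parallel to $\vec{\vu}_i$ (so the denominators in \Cref{eq:ortho} are nonzero); since these conditions are invariant under applying $\mR$ to both vectors, the equivariance statement holds on the (open, dense) set where $\mO_i$ is defined, and I would note this rather than belabor it.
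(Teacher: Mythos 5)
Your proposal is correct and follows essentially the same route as the paper's proof: a line-by-line check of the Gram--Schmidt construction using norm and inner-product preservation under orthogonal maps, the cross-product identity $(\mR\va)\times(\mR\vb)=(\det\mR)\,\mR(\va\times\vb)$ with $\det\mR=1$, and the observation that translations act trivially on the frame. Your added remark on genericity (non-degeneracy of the Gram--Schmidt inputs) is a small, sensible refinement the paper omits, but it does not change the argument.
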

\begin{proof}
To prove that the LRF $\mO_i = \operatorname{GS}(\vec{\vu}_{i}, \vec{\vv}_{i})$ are \(\text{SE}(3)\)-equivariant, we need to show that under any transformation $g \in \text{SE}(3)$, the transformation of $\mO_i$ through $g$ is equivariant to the LRF obtained from the transformed vectors $g \cdot \vec{\vu}_{i}$ and $g \cdot \vec{\vv}_{i}$.

Let $g$ be a transformation in $\text{SE}(3)$, which includes a rotation $\mO$ and a translation $\vec{\vt}$. 
Specifically, the transformation is applied as follows:
\begin{align*}
    g \cdot \mO_i &= \mO \mO_i, \\
    g \cdot \vu_i &= \mO \vu_i, \\
    g \cdot \vv_i &= \mO \vv_i.
\end{align*}

Let $\mO_i' = \operatorname{GS}(g \cdot  \vec{\vu}_{i}, g \cdot  \vec{\vv}_{i})$. By the properties of the Gram-Schmidt process, we have:
\begin{align*}
\vec{\ve}_{i1}' &= \frac{\mO \vec{\vu}_{i}}{\|\mO \vec{\vu}_{i}\|}, \\
\vec{\ve}_{i2}' &= \frac{\mO \vec{\vv}_{i} - \langle \mO \vec{\vv}_{i}, \vec{\ve}_{i1}' \rangle \vec{\ve}_{i1}'}{\|\mO \vec{\vv}_{i} - \langle \mO \vec{\vv}_{i}, \vec{\ve}_{i1}' \rangle \vec{\ve}_{i1}'\|}, \\
\vec{\ve}_{i3}' &= \vec{\ve}_{i1}' \times \vec{\ve}_{i2}'.
\end{align*}

Since $\mO^\top\mO=\mI$ and $\det(\mO) = 1$
, it preserves inner products and norms. And, the cross product obeys the following identity under matrix transformations:
$(M \Vec{\va}) \times(M \Vec{\vb})=(\operatorname{det} M)\left(M^{-1}\right)^{\mathrm{T}}(\Vec{\va} \times \Vec{\vb})$.
Therefore, $\|\mO \vec{\vu}_{i}\| = \|\vec{\vu}_{i}\|$ and $\langle \mO \vec{\vv}_{i}, \mO \vec{\vu}_{i} \rangle = \langle \vec{\vv}_{i}, \vec{\vu}_{i} \rangle$. This implies:
\begin{align*}
\vec{\ve}_{i1}' &= \frac{\mO \vec{\vu}_{i}}{\|\vec{\vu}_{i}\|} = \mO \vec{\ve}_{i1}, \\
\vec{\ve}_{i2}' &= \frac{\mO \vec{\vv}_{i} - \langle \mO\vec{\vv}_{i}, \mO\vec{\ve}_{i1} \rangle \mO \vec{\ve}_{i1}}{\|\mO \vec{\vv}_{i} - \langle \mO\vec{\vv}_{i}, \mO\vec{\ve}_{i1} \rangle \mO \vec{\ve}_{i1}\|}
=\frac{\mO \vec{\vv}_{i} - \langle \vec{\vv}_{i}, \vec{\ve}_{i1} \rangle \mO \vec{\ve}_{i1}}{\|\mO \vec{\vv}_{i} - \langle \vec{\vv}_{i}, \vec{\ve}_{i1} \rangle \mO \vec{\ve}_{i1}\|} = \mO \vec{\ve}_{i2}, \\
\vec{\ve}_{i3}' &= \mO \vec{\ve}_{i1} \times \mO \vec{\ve}_{i2} = \det(\mO)\mO (\vec{\ve}_{i1} \times \vec{\ve}_{i2}) = \mO \vec{\ve}_{i3}.
\end{align*}

Therefore, with $\mO_i' = \mO \mO_i = g \cdot \mO_i$, we confirm the \(\text{SE}(3)\)-equivariance of $\mO_i$. 

Besides, for reflection transformations, characterized by $\mO^\top\mO=\mI$ and $\det(\mO) = -1$, the equivariance is disrupted due to the properties of the cross product. Hence, although we utilize GVP, an E(3)-equivariant network, the Gram-Schmidt process can transform E(3)-equivariant vectors into SE(3)-equivariant matrices.

\end{proof}

\begin{theorem}
\label{proof:Transform}
The learned features, denoted as $ \{\vh^{\Vec{\mX}}_i\} = \operatorname{Transform}(\mO^{\Vec{\mX}}, \Vec{\mX}),  i\in\gV_{\Vec{\mX}}$, are \(\text{SE}(3)\)-invariant, satisfying any transformation $g, g\in\text{SE}(3)$, $ \{\vh^{\Vec{\mX}}_i\} = \operatorname{Transform}(g \cdot \mO^{\Vec{\mX}}, g \cdot \Vec{\mX})$. 
Similarly,  $\operatorname{Transform}(\mO^{\Vec{\mY}}, \Vec{\mY})$ are \(\text{SE}(3)\)-invariant.
\end{theorem}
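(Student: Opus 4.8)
The plan is to verify directly that each of the operations composing $\operatorname{Transform}$ in \Cref{eq:prl} respects the claimed invariance, using the equivariance of the LRF established in \Cref{proof:eqo}. Fix a transformation $g \in \text{SE}(3)$, writing $g$ as a rotation $\mO \in \text{SO}(3)$ together with a translation $\vec{\vt}$; its action on coordinates is $\vec{\vx}_i \mapsto \mO\vec{\vx}_i + \vec{\vt}$, and by \Cref{proof:eqo} its action on the per-point frame is $\mO_i \mapsto \mO\mO_i$. The core of the argument is the observation that the argument fed into the MLP $\sigma$, namely $\mO_i^\intercal(\vec{\vx}_j - \vec{\vx}_i)$, is already $\text{SE}(3)$-invariant at the level of each edge $(i,j) \in \gE_{\Vec{\mX}}$.

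First I would treat the translation: since the edge quantity only involves the difference $\vec{\vx}_j - \vec{\vx}_i$, the common shift $\vec{\vt}$ cancels, so $\vec{\vx}_j - \vec{\vx}_i$ is translation-invariant. Here one should also note that the $k$NN graph topology $\gE_{\Vec{\mX}}$ is itself invariant under $g$, because $g$ is an isometry and the neighbour sets are determined by Euclidean distances; hence the same set of edges appears before and after the transformation. Next I would handle the rotation: under $g$ the difference vector becomes $\mO(\vec{\vx}_j - \vec{\vx}_i)$ while the frame becomes $\mO\mO_i$, so the product transforms as
\begin{equation}
(\mO\mO_i)^\intercal \bigl(\mO(\vec{\vx}_j - \vec{\vx}_i)\bigr) = \mO_i^\intercal \mO^\intercal \mO (\vec{\vx}_j - \vec{\vx}_i) = \mO_i^\intercal(\vec{\vx}_j - \vec{\vx}_i),
\end{equation}
using $\mO^\intercal\mO = \mI$. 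Thus every edge input to $\sigma$ is unchanged, so the per-edge values $\sigma(\mO_i^\intercal(\vec{\vx}_j - \vec{\vx}_i))$ are unchanged, and applying the permutation-invariant aggregator $\operatorname{Agg}$ over the (unchanged) neighbour set yields the same $\{\vh_i^{\Vec{\mX}}\}$. This establishes $\operatorname{Transform}(g\cdot\mO^{\Vec{\mX}}, g\cdot\Vec{\mX}) = \operatorname{Transform}(\mO^{\Vec{\mX}}, \Vec{\mX})$; the identical argument with $\Vec{\mY}$, $\gE_{\Vec{\mY}}$ in place of $\Vec{\mX}$, $\gE_{\Vec{\mX}}$ gives the target case.

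This proof is essentially a routine verification, so I do not expect a serious obstacle; the only point requiring a little care is making explicit that the $k$NN edge set is $g$-invariant (so that ``same aggregation'' is legitimate) and that the frame transformation rule $\mO_i \mapsto \mO\mO_i$ is exactly what \Cref{proof:eqo} provides — in particular that reflections are correctly excluded, which is why we may use $\mO^\top\mO=\mI$ together with $\det\mO=1$ without the cross-product in the Gram--Schmidt step causing trouble. One might also remark, for completeness, that $\vh_i^{\Vec{\mX}}$ depends on $i$ only through the invariant local geometry, so it is well-defined independently of any global pose; combined with \Cref{proof:Cross-GVP} and \Cref{proof:eqo}, this is the last ingredient needed to conclude \Cref{prop:EquiShape}, since feeding $\text{SE}(3)$-invariant features into DGCNN and then into the similarity $S(\cdot,\cdot)$ preserves invariance.
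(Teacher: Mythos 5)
Your proposal is correct and follows essentially the same route as the paper's proof: both reduce the claim to the single computation $(\mO\mO_i)^\intercal\,\mO(\vec{\vx}_j - \vec{\vx}_i) = \mO_i^\intercal(\vec{\vx}_j - \vec{\vx}_i)$ after the translation cancels in the difference vector, then note the permutation-invariant aggregation preserves this. Your added remark that the $k$NN edge set is itself preserved under the isometry $g$ is a point the paper leaves implicit, and is a worthwhile clarification rather than a departure.
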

\begin{proof}
To prove that the learned features $\{\vh^{\Vec{\mX}}_i\} = \operatorname{Transform}(\mO^{\Vec{\mX}}, \Vec{\mX}),  i\in\gV_{\Vec{\mX}}$, are \(\text{SE}(3)\)-invariant, we need to demonstrate that under any transformation $g \in \text{SE}(3)$, the transformation of $\{\vh^{\Vec{\mX}}_i\}$ through $g$ is equivariant to the features obtained from the transformed LRF and points, \ie, $g \cdot \mO^{\Vec{\mX}}$ and $g \cdot \Vec{\mX}$.

Let $g$ be a transformation in $\text{SE}(3)$, which includes a rotation $\mO$ and a translation $\vec{\vt}$. 
Specifically, for a point $i \in \gV_{\Vec{\mX}}$ and its corresponding LRF $\mO_i$, the transformation is applied as follows:
\begin{align*}
    g \cdot \mO_i &= \mO \mO_i, \\
    g \cdot \vx_i &= \mO \vx_i + \vec{\vt}.
\end{align*}

Applying the Transform operation on the inputs, we get:
\begin{align*}
    &\operatorname{Agg} \sigma \left(\mO_i^\intercal \mO^\intercal (\mO \vx_j + \vec{\vt} - (\mO \vx_i + \vec{\vt}))\right) \\
    &= \operatorname{Agg} \sigma \left(\mO_i^\intercal \mO^\intercal \mO (\vx_j - \vx_i)\right)\\
    &= \operatorname{Agg} \sigma \left(\mO_i^\intercal (\vx_j - \vx_i)\right).
\end{align*}

Thus, $ \{\vh^{\Vec{\mX}}_i\} = \operatorname{Transform}(g \cdot \mO^{\Vec{\mX}}, g \cdot \Vec{\mX})$, proving that $\operatorname{Transform}(\mO^{\Vec{\mX}}, \Vec{\mX})$ is \(\text{SE}(3)\)-invariant. The same reasoning applies to $\operatorname{Transform}(\mO^{\Vec{\mY}}, \Vec{\mY})$, confirming its \(\text{SE}(3)\)-invariance.
\end{proof}

\begin{proposition}
\label{proof:EquiShape}
    \textsc{EquiShape} satisfy the constraints in \Cref{eq:SE(3)-invariance}.
\end{proposition}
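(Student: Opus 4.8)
The plan is to chain together the three preceding theorems to establish that the similarity matrix $S(F_{\Vec{\mX}}, F_{\Vec{\mY}})$ is unchanged when the inputs are acted on by independent transformations $g_1, g_2 \in \text{SE}(3)$. First I would trace the pipeline in order. Starting from \Cref{proof:Cross-GVP}, the LRF vectors transform as $(\vec{\vu}^{\Vec{\mX}}, \vec{\vv}^{\Vec{\mX}}) \mapsto (g_1 \cdot \vec{\vu}^{\Vec{\mX}}, g_1 \cdot \vec{\vv}^{\Vec{\mX}})$ under $g_1$ acting on $\Vec{\mX}$, with no effect from $g_2$ acting on $\Vec{\mY}$ (and symmetrically for the $\Vec{\mY}$ branch), since the cross-talk couples the two graphs only through invariant scalar features. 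Then \Cref{proof:eqo} upgrades this to $\mO_i^{\Vec{\mX}} \mapsto g_1 \cdot \mO_i^{\Vec{\mX}}$ via the Gram--Schmidt step, and likewise $\mO_i^{\Vec{\mY}} \mapsto g_2 \cdot \mO_i^{\Vec{\mY}}$.

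Next I would invoke \Cref{proof:Transform}: the parameterized transform $\operatorname{Transform}(\mO^{\Vec{\mX}}, \Vec{\mX})$ takes the equivariantly-transformed frame $g_1 \cdot \mO^{\Vec{\mX}}$ together with the transformed coordinates $g_1 \cdot \Vec{\mX}$ and produces exactly the same invariant scalar features $\{\vh_i^{\Vec{\mX}}\}$ — the rotation and translation cancel inside $\mO_i^\intercal \mO^\intercal(\mO\vx_j + \vt - \mO\vx_i - \vt)$. The same holds for $\Vec{\mY}$ with $g_2$. Since the downstream DGCNN in \Cref{eq:dgcnn} acts only on these invariant features $\{\vh_i^{\Vec{\mX}}\}$ (and $\{\vh_i^{\Vec{\mY}}\}$), the point representations satisfy $F_{g_1 \cdot \Vec{\mX}} = F_{\Vec{\mX}}$ and $F_{g_2 \cdot \Vec{\mY}} = F_{\Vec{\mY}}$. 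Applying the similarity function $S$ to these unchanged features immediately yields $S(F_{g_1 \cdot \Vec{\mX}}, F_{g_2 \cdot \Vec{\mY}}) = S(F_{\Vec{\mX}}, F_{\Vec{\mY}})$, which is precisely \Cref{eq:SE(3)-invariance}.

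The argument is essentially a bookkeeping composition, so there is no deep obstacle — the only point requiring care is making sure the \emph{independence} of the two transformations is preserved at every stage, i.e. that $g_2$ genuinely has no effect on the $\Vec{\mX}$-branch features and vice versa. This hinges on the observation (already made in the proof of \Cref{proof:Cross-GVP}) that the cross-attention in \Cref{eq:qkv,eq:bid-mu,eq:cross-att} mixes only the non-steerable scalar features $\vh_j^{(l)}$ across graphs, which are themselves invariant; the steerable vector channels $\vec{\mZ}^{(l)}$ are never exchanged. I would state this explicitly before concluding, since it is the one place where a careless reader might worry about cross-contamination breaking the per-shape invariance. The remainder is a direct citation of \Cref{proof:Cross-GVP,proof:eqo,proof:Transform} followed by the trivial remark that a function of invariant inputs is invariant.

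\begin{proof}
By \Cref{proof:Cross-GVP}, for any $g_1, g_2 \in \text{SE}(3)$ the outputs of $\operatorname{Cross-GVP}$ satisfy $(g_1 \cdot \vec{\vu}^{\Vec{\mX}}, g_1 \cdot \vec{\vv}^{\Vec{\mX}})$ and $(g_2 \cdot \vec{\vu}^{\Vec{\mY}}, g_2 \cdot \vec{\vv}^{\Vec{\mY}})$ when the inputs are replaced by $g_1 \cdot \Vec{\mX}$ and $g_2 \cdot \Vec{\mY}$; in particular the $\Vec{\mX}$-branch vectors depend only on $g_1$ and the $\Vec{\mY}$-branch vectors only on $g_2$, because the cross-talk in \Cref{eq:qkv,eq:bid-mu,eq:cross-att} transmits only the \(\text{SE}(3)\)-invariant scalar features between the two graphs. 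Applying \Cref{proof:eqo} to each point yields $\mO_i^{\Vec{\mX}} \mapsto g_1 \cdot \mO_i^{\Vec{\mX}}$ and $\mO_i^{\Vec{\mY}} \mapsto g_2 \cdot \mO_i^{\Vec{\mY}}$. By \Cref{proof:Transform}, $\operatorname{Transform}(g_1 \cdot \mO^{\Vec{\mX}}, g_1 \cdot \Vec{\mX}) = \operatorname{Transform}(\mO^{\Vec{\mX}}, \Vec{\mX}) = \{\vh_i^{\Vec{\mX}}\}$, and similarly $\operatorname{Transform}(g_2 \cdot \mO^{\Vec{\mY}}, g_2 \cdot \Vec{\mY}) = \{\vh_i^{\Vec{\mY}}\}$. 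Since the DGCNN in \Cref{eq:dgcnn} is a function of these features alone, $F_{g_1 \cdot \Vec{\mX}} = F_{\Vec{\mX}}$ and $F_{g_2 \cdot \Vec{\mY}} = F_{\Vec{\mY}}$. Therefore
\begin{equation*}
S\left(F_{g_1 \cdot \Vec{\mX}}, F_{g_2 \cdot \Vec{\mY}}\right) = S\left(F_{\Vec{\mX}}, F_{\Vec{\mY}}\right), \quad \forall g_1, g_2 \in \text{SE}(3),\ \forall \Vec{\mX}, \Vec{\mY} \in \R^{3 \times n},
\end{equation*}
which is exactly the constraint in \Cref{eq:SE(3)-invariance}.
\end{proof}
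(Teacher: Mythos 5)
Your proposal is correct and follows essentially the same route as the paper's own proof: compose the equivariance of Cross-GVP and Gram--Schmidt with the invariance of the LRF-Transform, note that the DGCNN sees only invariant features, and conclude invariance of $S$. Your explicit remark that independence is preserved because the cross-attention exchanges only invariant scalar features is a welcome clarification that the paper relegates to the proof of the Cross-GVP theorem, but it does not change the argument.
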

\begin{proof}
To demonstrate that \textsc{EquiShape} satisfies the constraints in \Cref{eq:SE(3)-invariance}, we need to show that the similarity matrix $S(\cdot, \cdot)$ computed from the learned features $F_{\Vec{\mX}}, F_{\Vec{\mY}}$ is invariant under any transformations $g_1, g_2 \in \text{SE}(3)$ applied to the input shapes ${\Vec{\mX}}, {\Vec{\mY}}$.

Recall that \textsc{EquiShape} comprises several components, each contributing to the overall invariance of the model. These components include the Cross-GVP, which provides equivariant vectors, the LRF-Transform, which ensures \(\text{SE}(3)\)-invariance, and a classical graph neural network (such as DGCNN), which further processes these invariant features.

Given transformations $g_1, g_2 \in \text{SE}(3)$, we apply them to the input shapes to get $g_1\cdot{\Vec{\mX}}, g_2\cdot{\Vec{\mY}}$. Due to the \(\text{SE}(3)\)-invariance of the Transform and the subsequent processing by DGCNN, the high-dimensional point representations $F_{\Vec{\mX}}, F_{\Vec{\mY}}$ remain invariant under these transformations. That is,
\begin{align*}
    F_{g_1\cdot{\Vec{\mX}}}, F_{g_2\cdot{\Vec{\mY}}} &= \operatorname{\textsc{EquiShape}}\left(g_1\cdot{\Vec{\mX}}, g_2\cdot{\Vec{\mY}} \right)\\
    &= \operatorname{\textsc{EquiShape}}\left({\Vec{\mX}}, {\Vec{\mY}} \right) = F_{\Vec{\mX}}, F_{\Vec{\mY}}.
\end{align*}

Consequently, the similarity matrix $S(F_{\Vec{\mX}}, F_{\Vec{\mY}})$ remains the same irrespective of the transformations applied to the input shapes:
\begin{align*}
    S\left(F_{g_1\cdot{\Vec{\mX}}}, F_{g_2\cdot{\Vec{\mY}}}\right) = S\left(F_{\Vec{\mX}}, F_{\Vec{\mY}}\right),
\end{align*}
which confirms the invariance of the similarity matrix $S(\cdot, \cdot)$ under any transformations in \(\text{SE}(3)\).

Therefore, \textsc{EquiShape} satisfies the constraints in \Cref{eq:SE(3)-invariance}, confirming that the learned correspondence, specifically the similarity matrix $S(\cdot, \cdot)$, is independent of the initial positions and orientations of the input shapes.
\end{proof}

\section{More Experimental Details}
\label{sec:details}

\noindent\textbf{Training Objective.}
To effectively learn point representation for non-rigid point cloud shape correspondence without relying on ground-truth supervision, we adopt self- and cross-reconstruction losses based on latent neighborhoods, following the approach outlined in previous work~\cite{lang2021dpc}. The cross-construction component aims to foster unique point matches between shape pairs, whereas the self-construction component serves as a regularizer, promoting smoothness in the correspondence map. We facilitate the construction of the target shape $\Vec{\mY}_c$ using the feature similarity $S(F_{\Vec{ \mX}}, F_{\Vec{\mY}})$, defined as:
\begin{equation}
    \label{apx:construction}
    \Vec{\vy}_{\Vec{\vx}_{i}}=\sum_{j \in \mathcal{N}_{\Vec{\mY}}\left(\Vec{\vx}_{i}\right)} \frac{e^{s_{i j}}}{\sum_{l \in \mathcal{N}_{\Vec{\mY}}(\Vec{\vx}_{i})} {e^{s_{i l}}}} \Vec{\vy}_{j},
\end{equation}
where $\mathcal{N}_{\Vec{\mY}}(\Vec{\vx}_{i})$ denotes the latent k-nearest neighbors of $\Vec{\vx}_{i}$ in the target $\Vec{\mY}$, and $s_{i j}$ represents the elements of the similarity matrix $S(F_{\Vec{\mX}}, F_{\Vec{\mY}})$.  When the source and target shapes are identical, we refer to the construction operation as self-construction.

Through cross-construction and self-construction, we generate point clouds $\Vec{\mY}_c, \Vec{\mX}_s, \Vec{\mX}_c, \Vec{\mY}_s$. The training is then constrained by the construction loss:
\begin{equation}
\begin{aligned}
    \label{apx:construction_loss}
    \mathcal{L}_{cons}=&\lambda_{cc}(\operatorname{CD}(\Vec{\mY}, \Vec{\mY}_c) + \operatorname{CD}(\Vec{\mX}, \Vec{\mX}_c)) \\
    &+ \lambda_{sc}(\operatorname{CD}(\Vec{\mY}, \Vec{\mY}_s) + \operatorname{CD}(\Vec{\mX}, \Vec{\mX}_s)),
\end{aligned}
\end{equation}
where $\lambda_{cc}$ and $\lambda_{sc}$ are hyperparameters, and $\operatorname{CD}$ denotes the Chamfer Distance.

We also introduce a regularization term to ensure that neighboring points in the source correspond to proximate points in the target:
\begin{equation}
    \label{apx:regularization}
    \mathcal{L}_{m}(\Vec{\mX}, \Vec{\mY}_c)=\sum_{i} \sum_{l \in \mathcal{N}_{\Vec{\mX}}(\Vec{\vx}_{i}) } e^{-\left\|\Vec{\vx}_{i}-\Vec{\vx}_{l}\right\|_{2}^{2} / \alpha} \left\|\Vec{\vy}_{\Vec{\vx}_{i}}-\Vec{\vy}_{\Vec{\vx}_{l}}\right\|_{2}^{2},
\end{equation}
where $\alpha$ is a hyperparameter. This loss is similarly defined for the mapping from the target to the source, namely, $\mathcal{L}_{m}(\Vec{\mY}, \Vec{\mX}_c)$.

Consequently, the mapping loss constrains the training as follows:
\begin{equation}
    \label{apx:mapping_loss}
    \mathcal{L}_{map}=\lambda_{m}(\mathcal{L}_{m}(\Vec{\mX}, \Vec{\mY}_c) + \mathcal{L}_{m}(\Vec{\mY}, \Vec{\mX}_c)),
\end{equation}
where $\lambda_{m}$ is a hyperparameter.

The total unsupervised training objective in the train dataset $\gD^s$ is thus composed of:
\begin{align}
    \label{apx:overall_loss}
    \mathcal{L}_{total} &=  \mathcal{L}_{cons} + \mathcal{L}_{map},\\
    \theta^* &= \argmin_{\theta}\E_{\{\Vec{\mX}^s,\Vec{\mY}^s\}\sim\gD^s} \mathcal{L}_{total}(\Vec{\mX}^s,\Vec{\mY}^s) .
\end{align}

\noindent\textbf{Baselines.}
Our approach is compared with recent state-of-the-art point-based matching methods: CorrNet3D~\cite{zeng2021corrnet3d}, DPC~\cite{lang2021dpc}, and SE-ORNet~\cite{deng2023se}. We ensure fairness in our evaluation by using the officially released source code and checkpoints for each baseline. Detailed descriptions of these baselines are as follows:

\begin{itemize}
    \setlength{\leftskip}{1em}
    \item \textbf{CorrNet3D}~\cite{zeng2021corrnet3d}: This method aims to learn a pairwise deformation field in an unsupervised manner, leveraging shape reconstruction and cycle consistency. Despite its innovative approach, CorrNet3D's deformation network demands extensive training data, thus limiting its generalizability.
    \item \textbf{DPC}~\cite{lang2021dpc}: DPC forgoes a deformation network, instead relying on feature extraction for point cloud similarity calculations to reconstruct shapes. It introduces multiple reconstruction losses to refine point cloud representations, focusing on smoother transitions.
    \item \textbf{SE-ORNet}~\cite{deng2023se}: This method enhances shape correspondence by estimating point cloud orientations and employing a domain-adaptive discriminator for alignment. Additionally, SE-ORNet incorporates a self-ensembling framework to obtain a more robust feature representation.
\end{itemize}

\noindent\textbf{Datasets.}
Our experiments span a diverse set of datasets (\Cref{tbl:sup_dataset}), validating the effectiveness of our method. The datasets and their configurations are as follows:

\begin{itemize}
    \setlength{\leftskip}{1em}
    \item \textbf{SURREAL}~\cite{varol2017learning}: A synthetic dataset generated from the SMPL human model~\cite{loper2023smpl}, comprising 230k shapes. For consistency with prior work~\cite{lang2021dpc,deng2023se}, we utilize a subset of 2k human shapes for training.
    \item \textbf{SHREC'19}~\cite{melzi2019shrec}: Known for its challenging variation in sampling density and mesh connectivity, this dataset includes 44 human shapes. We form 1892 training pairs from these shapes and evaluate using the official set of 430 annotated pairs.
    \item \textbf{SMAL}~\cite{zuffi20173d}: Featuring parametric models of five different animals, we create 2k shapes per category, amounting to 10k shapes. Training pairs are randomly selected across animal categories.
    \item \textbf{TOSCA}~\cite{bronstein2008numerical}: Generated from three base meshes (human, dog, and horse), we select 41 animal shapes to form 286 test samples within the same category.
    \item \textbf{CAPE}~\cite{cape1}: Consisting of real clothed human scans. We use 209 raw-scanned point cloud pairs, preprocessed as per ~\cite{huang2022multiway}, for evaluation.
    \item \textbf{DeepDeform}~\cite{bozic2020deepdeform}: Introduced in the supplementary material, this real-world dataset includes RGB-D frames of non-rigidly moving objects, characterized by noise and partiality. We use 50 animal and 35 human pairs from the test set preprocessed by ~\cite{huang2022multiway} for evaluation.
\end{itemize}

All datasets are standardized to a uniform point count of $n = 1024$ per shape, aligning with CorrNet3D~\cite{zeng2021corrnet3d} for fair comparison. 

\begin{table}[ht]
\centering
\setlength{\tabcolsep}{2.8pt}
\caption{\textbf{Dataset statistics.} We report the number of training, validation and test samples used in the quantitative evaluations.}
\label{tbl:sup_dataset}
\footnotesize
\vspace{-1.em}
\resizebox{0.8\textwidth}{!}{
\begin{tabular}{l|cccccccc}
\toprule
Dataset           & \# Train /val & \# Test & Category & Rigidity  & Align & Real & Partial \\ \midrule
SURREAL~\cite{varol2017learning}          & 1600/400           & /           &    human    & \ding{55}    & \ding{51}       & \ding{55}     & \ding{55}        \\
SHREC'19~\cite{melzi2019shrec} & 1513/379          & 430          & human         & \ding{55}      &  \ding{55} &  \ding{55}    & \ding{55}  \\
SMAL~\cite{zuffi20173d}       & 8000/2000           & /           & animal      & \ding{55}      & \ding{51}     &  \ding{55}       & \ding{55}    \\
TOSCA~\cite{bronstein2008numerical}           & /             & 286           &  animal     & \ding{55}   & \ding{51}      &   \ding{55}      & \ding{55}      \\
CAPE~\cite{cape1}           & /             & 209           &  human    & \ding{55}      & \ding{55}     &   \ding{51}    & \ding{55}   \\
DeepDeform-H~\cite{bozic2020deepdeform}           & /             & 35          &  human    & \ding{55}   & \ding{55}     &   \ding{51}    & \ding{51}   \\
DeepDeform-A~\cite{bozic2020deepdeform}           & /             & 50          &  animal   & \ding{55}      & \ding{55}     &   \ding{51}   & \ding{51}    \\
\bottomrule
\end{tabular}
}
\vspace{-1.5em}
\end{table}

\vspace{1.0em}

\noindent\textbf{Implementations.}
Our non-rigid shape graphs are constructed using $k=27$ neighboring nodes. In the CrossGVP module, we utilize $L=3$ GVP-G layers to generate \(\text{SE}(3)\)-equivariant LRF vectors. Each layer features have $d=64$ hidden feature channels. The LRF-Transform module incorporates a \textit{max} aggregation operator for pooling deformation features from node neighbors.

The basic model configuration, Ours-B, aligns with the DGCNN configuration of CorrNet3D~\cite{zeng2021corrnet3d}, consisting of a 5-layer EdgeConv with channels $\{64, 64, 128, 256, 512\}$. We also introduce Ours-L, with enhanced channel specifications $\{96, 192, 384, 768, 512\}$, in parallel with DPC~\cite{lang2021dpc} and SE-ORNet~\cite{deng2023se}. Each EdgeConv layer integrates batch normalization and LeakyReLU activation (negative slope $0.2$). Hyperparameters for the losses are set as $\lambda_{cc} = 1, \lambda_{sc} = 10, \lambda_m = 1$.

Implemented in PyTorch ~\cite{paszke2019pytorch}, our model uses the Adam optimizer ~\cite{kingma2014adam} with a batch size of 8 on an NVIDIA A100 GPU, spanning 300 epochs. The initial learning rate is 3e-4, reduced tenfold at epochs 6 and 9. Detailed information on the model's trainable parameters and computational complexity (FLOPs) is provided in \Cref{table:sota_large}.

For refinement, the component $\delta$, as defined in \Cref{eq:transformation_refinement}, is initialized with zero vectors and optimized using Adam ~\cite{kingma2014adam} with a learning rate of 1e-8 over 100 steps.


\section{Additional Results}

\subsection{Additional Evaluations on Partial DeepDeform Benchmark}
To further assess the adaptability of our model to real-world sensor data, we conduct performance evaluations on the DeepDeform dataset~\cite{bozic2020deepdeform}. This dataset features an array of real-world partial RGB-D scans of dynamic subjects, including humans, animals, cloth, etc. We directly test our trained models on human and animal categories. As \Cref{table:deepdeform} details, our method significantly outperforms baseline approaches. Notably, in the animal category, our method exceeds SE-ORNet, the second-best model, by over 40 percentage points in accuracy. This pronounced distinction is visually evident in \Cref{fig:dd_vis}. Such a marked improvement, especially amidst the diverse animal poses in the DeepDeform dataset, underscores our model's efficacy and robustness. Our SE(3)-equivariant approach is highly effective in handling complex and previously unseen poses.

\begin{table}[ht]
  \begin{center}
    \footnotesize
    \setlength\tabcolsep{8pt}
    \caption{\textbf{Extended Evaluations on Partial DeepDeform Benchmark.} The acronym ``DD'' refers to DeepDeform datasets, with ``H'' denoting human subjects and ``A'' representing animal categories. 
  ``A/B'' indicates training on the A dataset and testing on the B dataset. The results for other methods are reproduced using their officially released source code and checkpoints.}
    \label{table:deepdeform}
    \vspace{-0.8em}
    \begin{tabular}{c|cc|cc}
      \toprule
      \multirow{2}*{Method}  & \multicolumn{2}{c|}{SURREAL/DD-H} & \multicolumn{2}{c}{SMAL/DD-A} \\
      \cline{2-3}\cline{4-5}
      & acc $\uparrow$ & err $\downarrow$ & acc $\uparrow$ & err $\downarrow$ \\
      \midrule
      DPC\cite{lang2021dpc}                      & 57.0\%  & 5.3 & 9.6\% & 16.5 \\
      SE-ORNet\cite{deng2023se}          & 52.9\%  & \textbf{1.5} & 9.8\% & 4.5  \\
      \textbf{Ours}                              & \textbf{72.3\%}  & \textbf{1.5} & \textbf{51.0\%} & \textbf{2.8} \\
      \bottomrule
    \end{tabular}
  \end{center}
\end{table}


\begin{figure}[ht]
  \begin{center}
      \includegraphics[width=0.7\textwidth]{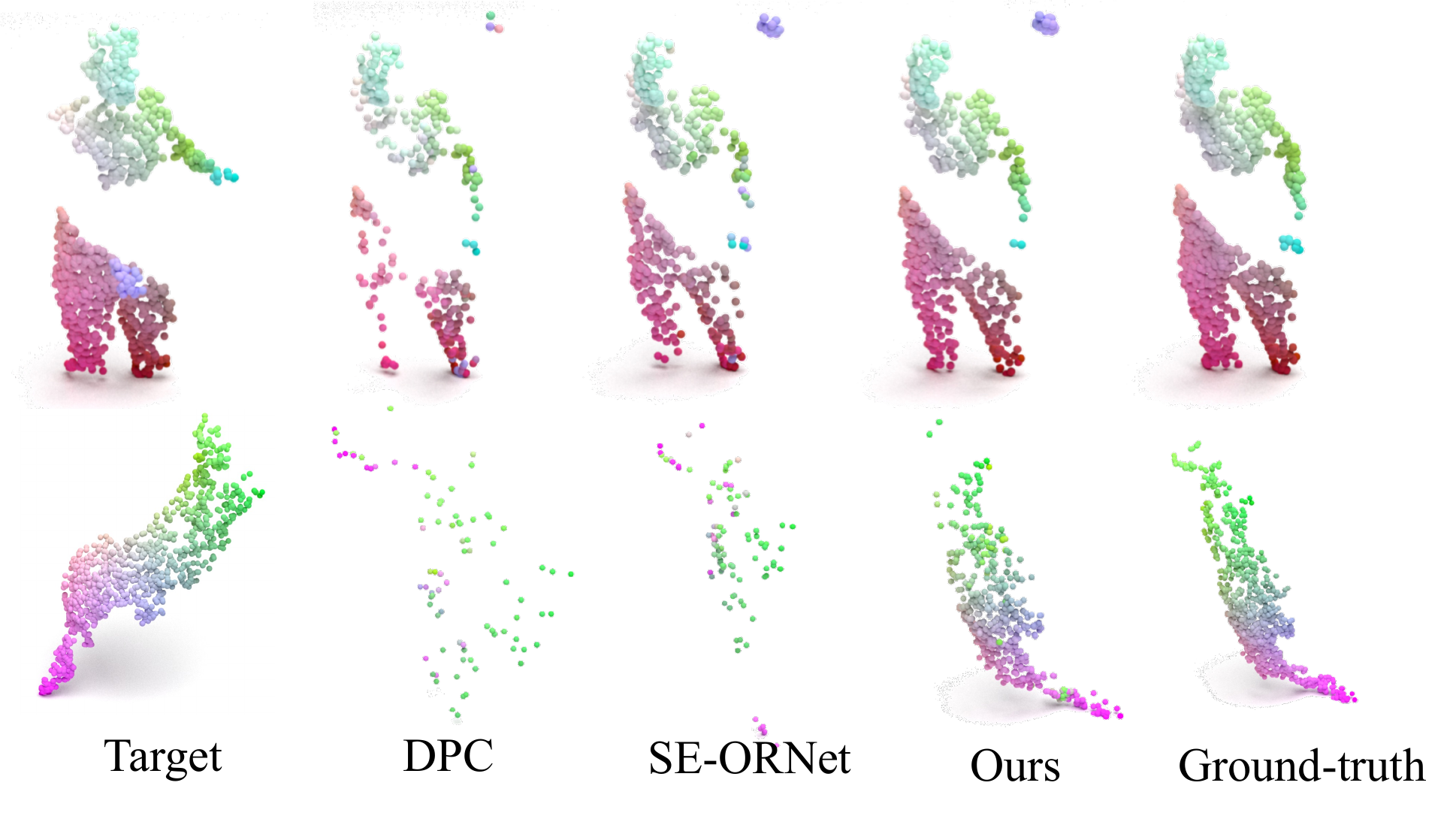}
      \caption{\textbf{Additional visualizations of the correspondence results} from DeepDeform test set.
      }
      \vspace{-1.5em}
      \label{fig:dd_vis}
  \end{center}
\end{figure}

\subsection{Additional Evaluations with Different Orientations}
\label{subsec:se3-invariance-analysis}
In the domain of 3D shape analysis, the ability of a model to generalize across various orientations is paramount. SE-ORNet~\cite{deng2023se} improves its orientation estimation for rotations around a fixed axis through data augmentation techniques. While it experiences a slight performance drop under yaw rotations, SE-ORNet nevertheless surpasses DPC~\cite{lang2021dpc} in performance. However, its accuracy under SE(3) transformations remains relatively low. Conversely, our \textsc{EquiShape} demonstrates considerable robustness to arbitrary orientation transformations, achieving 24.2\% accuracy even without the refinement process, as evidenced by \Cref{table:Invariance}. This accuracy increases to 30.3\% with the refinement process. As depicted in \Cref{fig:vis_se3},  we apply SE(3) transformations to target shapes from the CAPE test set. The visualizations distinctly showcase the disparity in performance between our approach and baseline methods, especially in handling orientation alterations. While baseline models often struggle with misaligning left and right hand orientations, our method adeptly preserves semantic alignment in these critical areas.

\begin{table}[ht]
\centering
    \setlength\tabcolsep{6pt}
    \caption{\textbf{SE(3)-Invariance Analysis on Test Dataset}: Models trained on SURREAL and assessed for correspondence accuracy, at an error tolerance of 0.01, on SHREC'19 test set. ``SE(3)'' refers to the application of random SE(3) transformations to the source shapes.  Conversely, ``yaw'' denotes the imposition of rotations exclusively around the $y$-axis on these source shapes.
    }

    \label{table:Invariance}
    \vspace{-0.8em}
    \resizebox{\columnwidth}{!}{
    \begin{tabular}{c|ccc}
      \toprule
      ~ &SHREC'19  &  SHREC'19(yaw) &  SHREC'19(SE(3))   \\ 
      \midrule
       DPC~\cite{lang2021dpc}& 17.7\%     & 9.1\%     &  3.0\%      \\
   SE-ORNet~\cite{deng2023se} &  21.5\%     &   15.1\%    & 2.6\% \\
   \textsc{EquiShape}   &   24.2\%   &   24.0\%   & 24.1\%  \\
   \textbf{\textsc{EquiShape} + LRF-Refine} &  \textbf{30.3\%}   &  \textbf{30.6 \%}   & \textbf{30.3\%}   \\
      \bottomrule
    \end{tabular}
    }

\end{table}

\begin{figure}[ht]
  \begin{center}
      \includegraphics[width=0.65\textwidth]{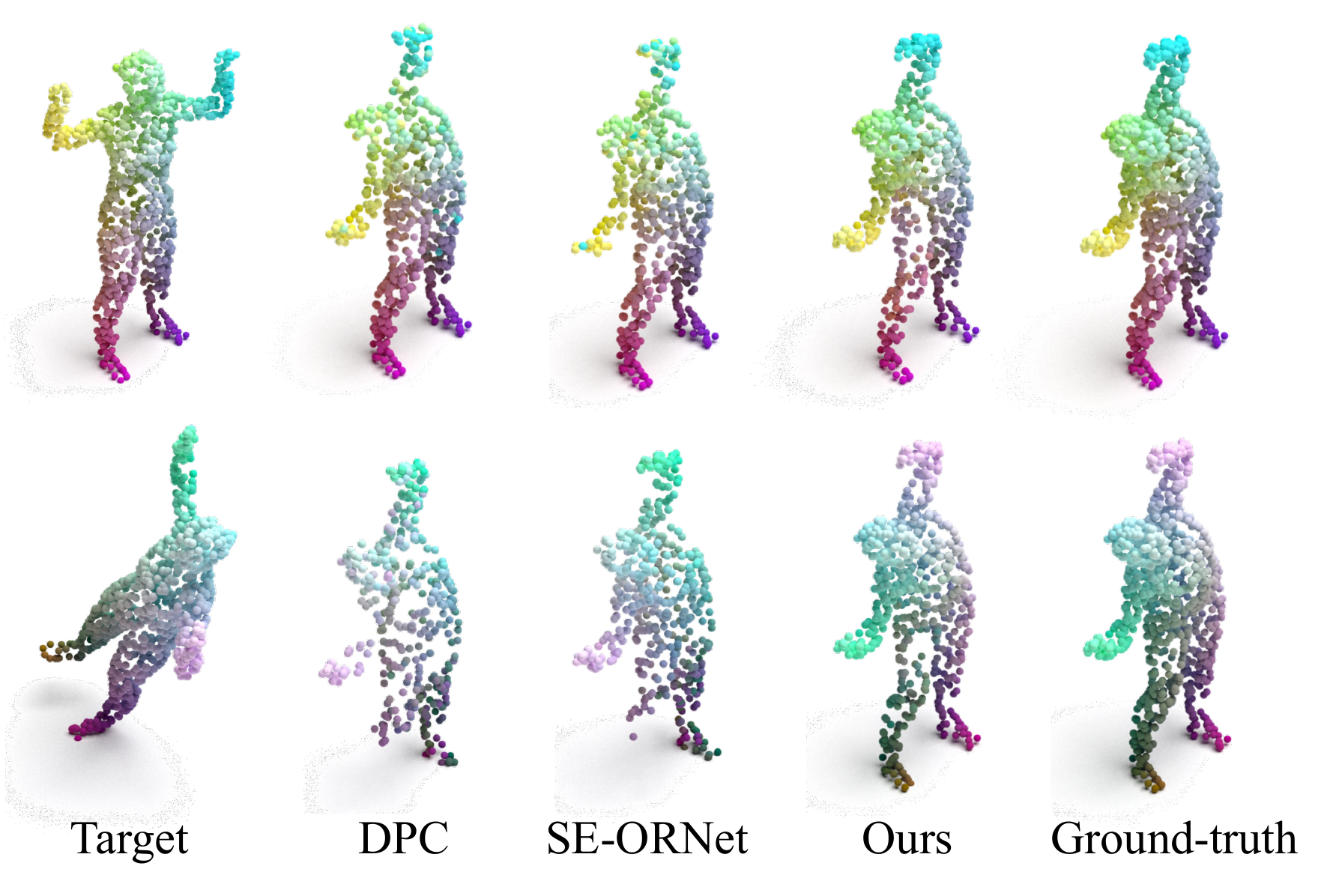}
      \caption{\textbf{Additional visualizations of the correspondence results with SE(3)-invariance analysis} from CAPE test set.
      }
      \vspace{-1.5em}
      \label{fig:vis_se3}
  \end{center}
\end{figure}

\subsection{Novelty compared to EquivariantMP} 
EquivariantMP~\cite{luo2022equivariant} is a general framework to achieve equivariance for point cloud analysis. It can be configured in two ways for shape matching tasks. The first configuration relies on dependent SE(3)-equivariance, wherein the point clouds of the two shapes to be matched, $\Vec{\mX}$ and $\Vec{\mY}$, are merged into a single geometric graph $\Gcal_{\Vec{\mX} \cup \Vec{\mY}} = (\Vcal_{\Vec{\mX} \cup \Vec{\mY}}, \Ecal_{\Vec{\mX} \cup \Vec{\mY}})$. In this graph, each node $i \in \Vcal_{\Vec{\mX} \cup \Vec{\mY}}$ represents a point with 3D location coordinates $\vec{\vp}_i \in {\Vec{\mX} \cup \Vec{\mY}}$. The graph topology $\Ecal_{\Vec{\mX} \cup \Vec{\mY}}$ is constructed using a $k$-nearest-neighbor ($k$NN) approach based on the Euclidean distance within the combined 3D point clouds ${\Vec{\mX}} \cup {\Vec{\mY}}$. The desired invariance is formulated as:
\begin{equation}
    S\left(F_{g\cdot{\Vec{\mX}}}, F_{g\cdot{\Vec{\mY}}}\right) \equiv  S\left(F_{\Vec{\mX}}, F_{\Vec{\mY}}\right),
    \label{eq:dependent SE(3)-invariance}
\end{equation}
where $g \in \text{SE}(3)$ and ${\Vec{\mX}}, {\Vec{\mY}}\in\R^{3\times n}$. However, modeling such dependent SE(3)-equivariance presents considerable challenges compared to single objects, primarily due to the infinite shape variations resulting from relative orientation transformations with exponentially growing complexities.

In contrast, our work employs another independent SE(3)-equivariant configuration, aiming for the predicted LRF to remain invariant regardless of the initial positions and orientations of the shapes. As described in \Cref{sec:preli}, we construct a pair of graphs $\Gcal_{\Vec{\mX}}$ and $\Gcal_{\Vec{\mY}}$, with the desired invariance being specified as in \Cref{eq:SE(3)-invariance}. Theoretically, this independent setting significantly reduces computational complexity. Empirical results from our experiments, as illustrated in \Cref{table:independent}, demonstrate the superior performance of the independent configuration over the dependent setting.
\begin{table}[ht]
\centering
    \setlength\tabcolsep{2.5pt}
    \caption{\textbf{Analysis of Independence}: The acronym ``D'' refers to dependent configuration, with ``I'' denoting independent configuration. Models trained on SURREAL and assessed for accuracy at an error tolerance of 0.01, on SHREC'19 test set. 
    }

    \label{table:independent}
    \vspace{-0.8em}
    \resizebox{0.65\columnwidth}{!}{
    \begin{tabular}{c|ccc}
      \toprule
     \ Method  & EquivariantMP(D)  & EquivariantMP(I) &  \textsc{EquiShape}    \\ 
      \midrule
       acc (\% ) $\uparrow$ & 6.1     & 19.9    &  \textbf{24.2}    \\
      \bottomrule
    \end{tabular}
    }
    \vspace{-1.5em}
\end{table}

Although the independent configuration for EquivariantMP effectively reduces computational complexity, it sacrifices some valuable associative information, notably the spatial relationships between shapes. To circumvent this limitation, we have ingeniously designed a cross-talk mechanism. This mechanism preserves the independent equivariance property whilst incorporating global geometric contexts---encompassing both the shape in question and its counterpart---thereby augmenting the distinctiveness of the predicted LRFs. Specifically, by integrating global information from absolute coordinates and employing message passing, our approach captures long-range semantic information, thereby enriching the model's semantic understanding beyond mere local structural details.

Furthermore, we leverage cross-attention to aggregate invariant scale features across shapes, synthesizing novel, globally-aware equivariant and invariant features. This methodology not only retains the independence of the equivariance property between shapes but also broadens the spectrum of contextual cues accessible for LRF learning. Through detailed ablation studies on the cross-talk mechanism, as evidenced in \Cref{table:cross-talk}, the efficacy of our design is substantiated.



\begin{table}[ht]
\centering
    \setlength\tabcolsep{2.5pt}
    \caption{\textbf{Analysis of Cross-talk}: The acronym ``I'' refers to independent configuration. Models trained on SURREAL and assessed for accuracy at an error tolerance of 0.01, on SHREC'19 test set.}
    \label{table:cross-talk}
    \vspace{-0.8em}
    \resizebox{0.8\columnwidth}{!}{
    \begin{tabular}{c|cccc}
      \toprule
     \multirow{2}{*}{Method}  &  \multirow{2}{*}{EquivariantMP(I)} & EquivariantMP(I)  & EquivariantMP(I)  & \multirow{2}{*}{\textsc{EquiShape}}    \\ 
      &  & + global coord. & + cross-atten. &     \\
      \midrule
       acc (\% ) $\uparrow$ & 19.9     & 22.5   & 20.6 &  \textbf{24.2}    \\
      \bottomrule
    \end{tabular}
    }
    \vspace{-1.5em}
\end{table}




Finally, in response to the inherent challenges posed by OOD geometric contexts, exacerbated by extensive shape variations, we have incorporated LRF-Refine, an optimization strategy applicable to LRF-based methods. This refinement process adjusts the LRF vectors to specific inputs,  under the guidance of model constraints, thereby substantially improving the geometric and semantic generalizability of point features, as evidenced in \Cref{table:refine}.

\begin{table}[ht]
\centering
    \setlength\tabcolsep{2.5pt}
    \caption{\textbf{Analysis of LRF-Refine}: The acronym ``I'' refers to independent configuration. Models trained on SURREAL and assessed for accuracy at an error tolerance of 0.01, on SHREC'19 test set.}
    \label{table:refine}
    \vspace{-0.8em}
    \resizebox{0.8\columnwidth}{!}{
    \begin{tabular}{c|cccc}
      \toprule
     \multirow{2}{*}{Method}  &  \multirow{2}{*}{EquivariantMP(I)} & EquivariantMP(I)  & \multirow{2}{*}{\textsc{EquiShape}}  & \textsc{EquiShape}    \\ 
      &  & + LRF-Refine. &  &  + LRF-Refine   \\
      \midrule
       acc (\% ) $\uparrow$ & 19.9     & 25.4   & 24.2 &  \textbf{30.3}    \\
      \bottomrule
    \end{tabular}
    }
    \vspace{-1.5em}
\end{table}

\subsection{Additional Results with Enhanced EdgeConv Channels}
To enhance the capacity of our model, we retrain it with an expanded DGCNN, adopting channel configurations of $\{64, 64, 128, 256, 512\}$, namely Ours-L, aligning with the specifications of DPC~\cite{lang2021dpc} and SE-ORNet~\cite{deng2023se}. The comprehensive outcomes across various datasets are presented in \Cref{table:sota_large} and visually depicted in \Cref{fig:vis_large}. Impressively, the incorporation of enhanced EdgeConv channels in the Ours-L model leads to notable performance gains. Specifically, on the challenging SHREC'19 benchmark, Ours-L attains an impressive 27.1\% accuracy, marking a significant improvement of 22.6\% compared to the Ours-B model.

\begin{table*}[ht]
\centering
    \footnotesize
    \setlength\tabcolsep{7pt}
    \caption{\textbf{Additional comparisons with State-of-the-Art methods.}  ``acc'' represents the correspondence accuracy with an error tolerance of 0.01, and ``err'' indicates the average correspondence error in centimeters (cm). Higher ``acc'' and lower ``err'' values denote superior performance. Results for comparative methods were obtained using their officially released source codes and checkpoints. The term ``Params''  denotes the number of trainable parameters. The basic model configuration, Ours-B, aligns with the DGCNN configuration of CorrNet3D~\cite{zeng2021corrnet3d}, consisting of a 5-layer EdgeConv with channels $\{64, 64, 128, 256, 512\}$. We also introduce Ours-L, with enhanced channel specifications $\{96, 192, 384, 768, 512\}$, paralleling DPC~\cite{lang2021dpc} and SE-ORNet~\cite{deng2023se}.}
    \label{table:sota_large}
    \resizebox{\textwidth}{!}{
    \begin{tabular}{c|c|c|cc|cc|cc|cc}
      \toprule
      \multirow{2}{*}{Methods}  & \multirow{2}{*}{Params} & \multirow{2}{*}{FLOPs} & \multicolumn{2}{c|}{SHREC'19~/~SHREC'19} & \multicolumn{2}{c|}{SURREAL~/~SHREC'19} & \multicolumn{2}{c|}{SMAL~/~TOSCA} & \multicolumn{2}{c}{SURREAL~/~CAPE}\\
      \cline{4-5}\cline{6-7}\cline{8-9}\cline{10-11}
      & & & acc $\uparrow$ & err $\downarrow$ & acc $\uparrow$ & err $\downarrow$ & acc $\uparrow$ & err $\downarrow$ & acc $\uparrow$ & err $\downarrow$ \\
      \midrule
      CorrNet3D\cite{zeng2021corrnet3d}          & 5.4M & 17.9G &/  & / & 7.4\% & 18.3 &/ & / & 28.3\%& 9.3\\
      DPC\cite{lang2021dpc}                     & 1.5M&  44.7G &14.4\%  & 14.4 & 17.4\% & 15.0 &33.8\% &7.4 & 50.5\% & 5.6 \\
      SE-ORNet\cite{deng2023se}                     & 2.5M & 187.2G & 16.9\%  & 14.0 & 21.8\% & 11.8 &38.1\% & 5.2 & 43.2\% & 8.0\\
        Ours-B                              &  762K & 30.1G &22.1\%  & 13.7 & 30.3\% &  10.1 & 57.7\%& 3.0 & 68.4\% & 2.8 \\
        \textbf{Ours-L}                             &  1.9M & 70.1G &\textbf{27.1\%}  & \textbf{10.9} & \textbf{31.8\%} &  \textbf{9.7} & \textbf{59.6\%}& \textbf{2.7} & \textbf{69.2\%} & \textbf{2.8} \\
      \bottomrule
    \end{tabular}
    }
\end{table*}

\begin{figure}[ht]
  \begin{center}
      \includegraphics[width=0.75\textwidth]{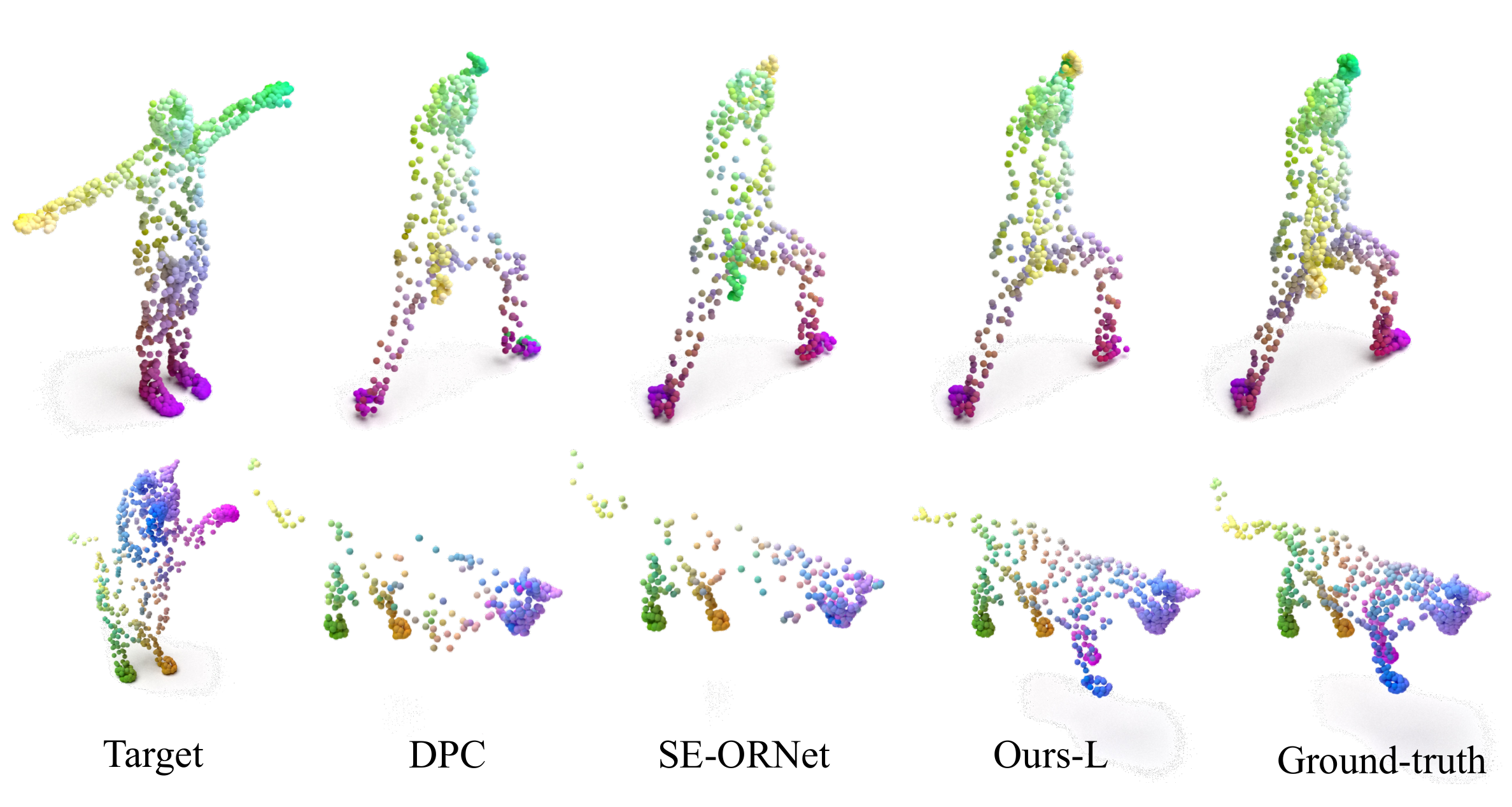}
      \caption{\textbf{Additional visualizations of the correspondence results} from SHREC'19 (top) and TOSCA (bottom) test set.
      }
      \vspace{-1.5em}
      \label{fig:vis_large}
  \end{center}
\end{figure}

\section{Limitations}
Our \textsc{EquiShape}, while effective, is subject to two potential limitations. Firstly,  our approach is fundamentally reliant on geometric graphs, which might present challenges in scenarios characterized by sudden geometric changes. In fact, this is a common issue for other methods based on the assumption of local rigidity.
Secondly, the reliance on unsupervised reconstruction losses for training might limit the method's effectiveness in low-overlap cases between shapes. Addressing these challenging scenarios remains an area for future exploration.

\end{document}